\documentclass{article}


\usepackage[final,nonatbib]{neurips_2019} 

\usepackage[utf8]{inputenc} 
\usepackage[T1]{fontenc}    
\usepackage{hyperref}       
\usepackage{url}            
\usepackage{booktabs}       
\usepackage{amsfonts}       
\usepackage{amssymb}
\usepackage{amsmath}
\usepackage{amsthm}
\usepackage{mathrsfs}
\usepackage{nicefrac}       
\usepackage{microtype}      
\usepackage{graphicx}       
\usepackage{color}          
\usepackage{mathtools}
\usepackage{graphicx}
\usepackage[T1]{fontenc}
\usepackage{bm}
\usepackage{bbm}
\usepackage{scalerel}
\usepackage{esvect}
\usepackage{algorithm}
\usepackage{algpseudocode}
\usepackage{cleveref}
\usepackage{stmaryrd}
\usepackage{color}
\usepackage{url}
\usepackage{comment}
\usepackage{multicol}
\usepackage{pdfpages}
\usepackage{graphbox}

\renewcommand{\v}[1]{\bm{#1}}
\renewcommand{\P}{\mathcal{P}}
\newcommand{\V}{\mathcal{V}}
\newcommand{\D}{\mathcal{D}}
\newcommand{\N}{\mathcal{N}}
\newcommand{\U}{\mathcal{U}}
\newcommand{\B}{\mathcal{B}}
\newcommand{\W}{\mathcal{W}}
\newcommand{\T}{\mathcal{T}}
\newcommand{\ip}[2]{\left\langle#1,#2\right\rangle}
\newcommand{\norm}[1]{\left\lVert#1\right\rVert}
\newcommand{\reals}{\mathbb{R}}
\newcommand{\ones}{\mathbbm{1}}
\newcommand{\argmin}[1]{\underset{#1}{\arg \min} \enskip}
\newcommand{\argmax}[1]{\underset{#1}{\arg \max} \enskip}

\DeclareMathOperator{\diag}{diag}

\DeclareMathOperator{\tr}{tr}
\newcommand{\itercur}{^{\scaleto{(t)}{6pt}}}
\newcommand{\iternext}{^{\scaleto{(t+1)}{6pt}}}
\renewcommand{\hat}[1]{\widehat{#1}}

\newcommand{\Prob}{\mathbb{P}}
\newcommand{\E}{\mathbb{E}}

\newtheorem{theorem}{Theorem}[section]
\newtheorem{proposition}[theorem]{Proposition}
\newtheorem{lemma}[theorem]{Lemma}
\newtheorem{corollary}[theorem]{Corollary}
\newtheorem{definition}[theorem]{Definition}

\newenvironment{proofsketch}{\proof}{\endproof}

\title{Hierarchical Optimal Transport for\\ Multimodal Distribution Alignment}

\author{
  John~Lee\textsuperscript{\ensuremath\dagger}\thanks{JL is currently with DSO National Laboratories of Singapore.}~,
  Max~Dabagia\textsuperscript{\ensuremath\dagger},
  Eva~L.~Dyer\textsuperscript{\ensuremath\dagger\ensuremath\ddagger}\thanks{Equal contributing senior authors.}~,
  Christopher~J.~Rozell\textsuperscript{\ensuremath\dagger}\footnotemark[2]\\
  \textsuperscript{\ensuremath\dagger}School of Electrical and Computer Engineering,\\
  \textsuperscript{\ensuremath\ddagger}Coulter Department of Biomedical Engineering\\
  Georgia Institute of Technology, Atlanta, GA, 30332 USA\\
  \texttt{\{john.lee, maxdabagia, evadyer, crozell\}@gatech.edu} \\
}

\begin{document}

\maketitle

\vspace{-2mm}
\begin{abstract}
\vspace{-2mm}
In many machine learning applications, it is necessary to meaningfully aggregate, through alignment, different but related datasets. Optimal transport (OT)-based approaches pose alignment as a divergence minimization problem: the aim is to transform a source dataset to match a target dataset using the Wasserstein distance as a divergence measure under alignment constraints. We introduce a hierarchical formulation of OT which leverages clustered structure in data to improve alignment in noisy, ambiguous, or multimodal settings. To solve this numerically, we propose a distributed ADMM algorithm that  exploits the Sinkhorn distance, thus it has an efficient computational complexity that scales quadratically with the size of the largest cluster. When the transformation between two datasets is unitary, we provide performance guarantees that describe \textit{when} and \textit{how well}  cluster correspondences can be recovered with our formulation, and then describe the worst-case dataset geometry for such a strategy. We apply this method to synthetic datasets that model data as mixtures of low-rank Gaussians and study the impact that different geometric properties of the data have on alignment. Next, we applied our approach to a neural decoding application where the goal is to predict movement directions and instantaneous velocities from populations of neurons in the macaque primary motor cortex. Our results demonstrate that when clustered structure exists in datasets, and is consistent across trials or time points, a hierarchical alignment strategy that leverages such structure can provide significant improvements in cross-domain alignment.
\vspace{-3mm}
\end{abstract}

\section{Introduction}
\vspace{-3mm}
In many machine learning applications, it is necessary to meaningfully aggregate, through alignment, different but related datasets (e.g., data across time points or under different conditions or contexts).
Alignment is an important problem at the heart of transfer learning \cite{pan2009survey, weiss2016survey}, point set registration \cite{chui2003new, myronenko2010point, tam2013registration}, and shape analysis \cite{bronstein2006generalized, bronstein2011shape, ovsjanikov2012functional}, but is generally NP hard. In recent years, distribution alignment methods that use optimal transport (OT) to quantify similarity between two distributions have increased in popularity due to their attractive mathematical properties and impressive performance in a variety of tasks \cite{peyre2019computational, courty2017optimal}. However, using OT to solve unsupervised distribution alignment problems that must simultaneously match two datasets' distributions (using OT) while also learning a transformation between their latent spaces, is extremely challenging, especially when the data has complicated multi-modal structure. Leveraging additional structure in the problem is thus necessary to regularize OT and constrain the solution space.

Here, we leverage the fact that heterogeneous datasets often admit  {\em clustered} or {\em multi-subspace} structure to improve OT-based distribution alignment. Our solution to this problem is to simultaneously estimate the cluster alignment across two datasets using their local geometry, while also solving a global alignment problem to meld these local estimates. While it is advantageous to regularize the OT problem with known cluster pairings \cite{courty2017optimal, das2018unsupervised}, we are instead concerned with the substantially harder unsupervised setting where such information is missing. We introduce a hierarchical formulation of OT for clustered and multi-subspace datasets called {\em Hierarchical Wasserstein Alignment (HiWA)}\footnote{MATLAB code can be found at \href{https://github.com/siplab-gt/hiwa-matlab}{https://github.com/siplab-gt/hiwa-matlab}. Neural datasets and Python code are provided at \href{http://nerdslab.github.io/neuralign}{http://nerdslab.github.io/neuralign}}.

We empirically show that when data are well approximated with Gaussian mixture models (GMMs) or lie on a union of subspaces, we may leverage existing clustering pipelines (e.g., sparse subspace clustering \cite{elhamifar2013sparse} \cite{dyer2013ssc}) to improve alignment.
When the transformation between datasets is unitary, we provide analyses that reveal key geometric and sampling insights, as well as perturbation and failure mode analyses. To solve the problem numerically, we propose an efficient distributed ADMM algorithm that also exploits the Sinkhorn distance, thus benefiting from efficient computational complexity that scales quadratically with the size of the largest cluster.

To test and benchmark our approach, we applied it to synthetic data generated from mixtures of low-rank Gaussians and studied the impact of different geometric properties of the data on alignment to confirm the predictions of our theoretical analysis. 
Next, we applied our approach to a neural decoding application where the goal is to predict movement directions from populations of neurons in the macaque primary motor cortex. Our results demonstrate that when clustered structure exists in neural datasets and is consistent across trials or time points, a hierarchical alignment strategy that leverages such structure can provide significant improvements in unsupervised decoding from ambiguous (symmetric) movement patterns. This suggests OT can be applied to a wider range of neural datasets, and shows that a hierarchical strategy avoids local minima encountered by a global alignment strategy that ignores clustered structure.

\vspace{-3mm}
\section{Background and related work}
\vspace{-3mm}

\textbf{Transfer learning and distribution alignment.}
A fundamental goal in transfer learning is to aggregate related datasets by learning a mapping between them.
We wish to learn a transformation $T \in \T$, where $\T$ refers to some class of transformations that aligns distributions under a notion of probability divergence $\D(\cdot|\cdot)$  between a target distribution $\mu$ and a reference (source) distribution $\nu$:
\begin{equation}
\min_{T \in \T} \D(T(\mu)|\nu) .
\end{equation}

Various probability divergences have been proposed in the literature, such as Euclidean least-squares (when data ordering is known) \cite{shi2010transfer, shekhar2013generalized, han2012sparse}, Kullback-Leibler (KL) \cite{sugiyama2008direct}, maximum mean discrepancy (MMD) \cite{pan2011domain, baktashmotlagh2013unsupervised, long2014transfer, gong2016domain}, and the Wasserstein distance \cite{ courty2017optimal}, where trade-offs are often statistical (e.g., consistency, sample complexity) versus computational.
Alignment problems are ill-posed since the space of $\mathcal{T}$ is large, so \emph{a priori} structure is often necessary to constrain $\mathcal{T}$ based on geometric assumptions. Compact manifolds like the Grassmann or Stiefel \cite{gopalan2011domain, gong2012geodesic} are primary choices when little information is present, as they preserve isometry. Non-isometric transformations, though richer, demand much more structure (e.g., manifold or graph structure) \cite{wang2008manifold, wang2009general, ferradans2014regularized, cui2014generalized, courty2017optimal}. 

\textbf{Low-rank and union of subspaces models.}
Principal components analysis (PCA), one of the most popular methods in data science, assumes a \textit{low-rank} model where the top-$k$ principal components of a dataset provide the optimal rank-$k$ approximation under an Euclidean loss.
This has been extended to robust (sparse errors) settings~\cite{elhamifar2013sparse}, and multi- (union of) subspaces settings where data can be partitioned into disjoint subsets where each subset of data is locally low-rank \cite{eldar2009robust}.
Transfer learning methods based on subspace alignment \cite{fernando2013unsupervised, sun2015subspace, sun2016return} work well with zero-mean unimodal datasets, but struggle on more complicated modalities (e.g., Gaussian mixtures or union of subspaces) due to a mixing of covariances.
Related to our work, \cite{thopalli2018multiple} performs multi-subspace alignment by greedily assigning correspondences between subspaces using chordal distances; this however discards valuable information about a distribution's shape.

\textbf{Optimal transport.}
Optimal transport (OT) \cite{kantorovich2006problem} is a natural type of divergence for registration problems because it accounts for the underlying geometry of the space.
In Euclidean settings, OT gives rise to a metric known as the Wasserstein distance $\W(\mu,\nu)$ which measures the minimum effort required to “displace” points across measures $\mu$ and $\nu$ (understood here as empirical point clouds).
Therefore, OT relieves the need for kernel estimation to create an overlapping support of the measures $\mu,\nu$.
Despite this attractive property, it has both a poor numerical complexity of $O(n^3 \log n)$ (where $n$ is the sample size) and a dimension-dependent sample complexity of $O(n^{-1/d})$, where the data dimension is $d$ \cite{dudley1969speed, weed2017sharp}.
Recently, an entropically regularized version of OT known as the Sinkhorn distance \cite{cuturi2013sinkhorn} has emerged as a compelling divergence measure; it not only inherits OT's geometric properties but also has superior computational and sample complexities of $O(n^2)$ and $O(n^{-1/2})$\footnote{Dependent on a regularization parameter \cite{genevay2018sample}.}, respectively.
It has also become a versatile building block in domain adaptation \cite{courty2017optimal, courty2017joint}.
Prior art \cite{courty2017optimal} has largely exploited the OT's push-forward as the alignment map since this map minimizes the OT cost between the source and target distributions while allowing a priori structure to be easily incorporated (e.g., to preserve label/graphical integrity). Such an approach, however, is fundamentally expensive when $d \ll n$ since the primary optimization variable is a large transport coupling (i.e., $\reals^{n \times n}$), while in reality the alignment mapping is merely $\reals^d \mapsto \reals^d$.
Moreover, it assumes that the source and target distributions are close in terms of their squared Euclidean distance (i.e., an identity transformation), but this does not generally hold between arbitrary latent spaces.

{\bf Hierarchical OT and related work.} The idea of learning an affine or unitary transformation to align datasets with an OT-based divergence has previously been studied in \cite{zhang2017earth, alvarez2018towards, grave2018unsupervised}, a problem known as OT Procrustes. However, these methods don't use problem-specific or clustered structure in data. Hierarchical OT is a recent generalization of OT \cite{yurochkin2019hierarchical, schmitzer2013hierarchical, alvarez2017structured} that is an effective and efficient way of injecting structure into OT but it has never been used to \textit{jointly} solve alignment problems -- our work represents a first attempt at doing so. Thus, a key contribution of this paper is putting both of these two ingredients together to develop a scalable strategy that leverages multimodal structure in data solve the OT Procrustes problem.


\vspace{-3mm}
\section{Hierarchical Wasserstein alignment}
\vspace{-3mm}

\textbf{Preliminaries and notation.}
Consider clustered datasets $\{\v{X}_i \in \reals^{D \times n_{x,i}} \}_{i=1}^S$ and $\{ \v{Y}_j \in \reals^{D \times n_{y,j}} \}_{j=1}^S$ whose clusters are denoted with the indices $i,j$ and whose columns are treated as $\reals^D$ embedding coordinates.
The number of samples in the $i$-th ($j$-th) cluster of dataset $\v{X}$ (dataset $\v{Y}$) is given by  $n_{x,i}$ ($n_{y,j}$). We express the empirical measures of clusters $\v{X}_i$ and $\v{Y}_j$ as 
$\mu_{i} := \frac{1}{n_{x,i}} \sum_{k=1}^{n_{x,i}} \delta_{\v{X}_i(k)}$ and 
$\nu_{j} := \frac{1}{n_{y,j}} \sum_{l=1}^{n_{y,j}} \delta_{\v{Y}_j(l)}$, respectively,
where $\delta_{\v{x}}$ refers to a point mass located at coordinate $\v{x} \in \reals^D$.
The squared 2-Wasserstein distance between $\mu_i$ and $\nu_j$ is defined as
\begin{equation*}
\vspace{-1mm}
    \W_2^2(\mu_i,\nu_j)
    :=
    \min_{\v{Q} \in \U(n_{x,i},n_{y,j})}
    \sum_{k=1}^{n_{x,i}} \sum_{l=1}^{n_{y,j}}
    Q(k,l) \norm{\v{X}_i(k) - \v{Y}_j(l)}_2^2 
\end{equation*}
where $\v{Q}$ is a doubly stochastic matrix that encodes point-wise correspondences (i.e., the $(k,l)$-th entry describes the flow of mass between $\delta_{\v{X}_i(k)}$ and $\delta_{\v{Y}_j(l)}$), 
$\v{X}_i(k)$ is the $k$-th column of matrix $\v{X}_i$, and the constraint $\U(m,n) := \{\v{Q} \in \reals^{m \times n}_+ : \v{Q} \ones_n = \ones_m/m , \v{Q}^\top \ones_m = \ones_n/n \}$ refers to the \textit{uniform} transport polytope (with $\ones_m$ a length $m$ vector containing ones). We will use $\norm{\cdot}$ to denote the operator norm, $\v{X}^\dagger$ to denote the pseudo-inverse of $\v{X}$, and $\v{I}_d$ to denote the $d \times d$ identity matrix.

\textbf{Overview.}
Although unsupervised alignment is challenging due to the presence of local minima, the imposition of additional structure will help to prune them away.
Our key insight is that hierarchical structure decomposes a complicated optimization surface into simpler ones that are less prone to local minima.
We formulate a hierarchical Wasserstein approach to align datasets with known (or estimated) clusters $\{\mu_i\}_{i=1}^S,\{\nu_j\}_{j=1}^S$ but whose correspondences are unknown.
The task therefore is to jointly learn the alignment $T$ and the cluster-correspondences:
\begin{equation}
\vspace{-1mm}
\label{eq:overview}
    \min_{\v{P} \in \B_S , T \in \T }
    \sum_{i=1}^S \sum_{j=1}^S P_{ij} \W_2^2 ( T (\mu_i) , \nu_j )  ,
\end{equation}
where the matrix $\v{P}$ encodes the strength of correspondences between clusters, with a large $P_{ij}$ value indicating a correspondence between clusters $i,j$, and a small value indicating a lack thereof.
We note that $\B_S := \U(S,S)$ is a special type of transport polytope known as the $S$-th Birkhoff polytope.
Interestingly, this becomes a nested (or block) OT formulation, where correspondences are resolved at two levels: the outer level resolves cluster-correspondences (via $\v{P}$) while the inner level resolves point-wise correspondences between cluster points (via the Wasserstein distance).

\textbf{Alignment over the Stiefel manifold.}
Assuming clusters lie on subspaces and principal angles between subspaces are ``well preserved'' across $\v{X}$ and $\v{Y}$ (we make this precise in Theorem \ref{thm:cluster_based_perturbation_bounds}), an isometric transformation suffices.
Hence, we solve \eqref{eq:overview} with $\T \gets \V_{D,D}$, the Stiefel manifold which is defined as $\V_{k,d} := \{\v{R} \in \reals^{k \times d} : \v{R}^\top \v{R} = \v{I}_d \}$. Explicitly, we can re-formulate equation (2) as:
\begin{equation}
\vspace{-1mm}
\label{eq:original_problem_short}
    \min_{ \v{P} , \v{R} , \{\v{Q}_{ij}\} }
    \sum_{i,j} P_{ij} C_{ij} (\v{R},\v{Q}_{ij})
    \quad \text{s.t.} \quad
    \v{P} \in \B_S , \quad
    \v{R} \in \V_{D,D} , \quad
    \v{Q}_{ij} \in \U(n_{x,i},n_{y,j}),
\end{equation}
\vspace{-2mm}
\begin{align}
\vspace{-2mm}
\label{eq:Cij_definition}
{\rm where}\quad     C_{ij} (\v{R},\v{Q}_{ij})
    &:=
    \frac{1}{D} \sum_{k,l} \v{Q}_{ij}(k,l) \norm{\v{R}\v{X}_i(k) - \v{Y}_j(l)}_2^2
\end{align}
measures pairwise cluster divergences using the squared 2-Wasserstein distance under a Stiefel transformation $\v{R}$ acting on the $i^{\rm th}$ cluster.

Finally, we include entropic regularization over transportation couplings $\v{P}$ and all $\v{Q}_{ij}$'s to modify the Wasserstein distances to Sinkhorn distances, so as to take advantage of its superior computational and sample complexities. Omitting constraints for brevity, our final problem is given as
\begin{equation}
\label{eq:final_problem}
    \min_{\v{P},\v{R},\{\v{Q}_{ij}\}}
    \sum_{i,j} \Big( P_{ij} C_{ij} (\v{R},\v{Q}_{ij}) + H_{\gamma_2}(\v{Q}_{ij}) \Big)
    + H_{\gamma_1}(\v{P}) ,
\end{equation}
where $\gamma_1,\gamma_2>0$ are the entropic regularization parameters and the negative entropy function is defined as $H_\gamma(\v{P}) := \gamma \sum_{i,j} P_{ij} \log P_{ij}$.
Parameters $\gamma_1,\gamma_2$ control the correspondence entropy, therefore \eqref{eq:final_problem} approximates \eqref{eq:original_problem_short} when $\gamma_1,\gamma_2 > 0$, but reverts to the original problem \eqref{eq:original_problem_short} as $\gamma_1,\gamma_2 \to 0$.

\textbf{Distributed ADMM approach.}
Problem \eqref{eq:final_problem} is non-convex due to multilinearity in the objective and its Stiefel manifold domain.
Although alternating directions method of multipliers (ADMM) is a convergent convex solver framework \cite{eckstein1992douglas,boyd2011distributed}, it is being applied in increasingly many non-convex settings \cite{wang2019global}.
Since \eqref{eq:final_problem} readily admits a splitting structure that separates the individual $C_{ij}$ blocks, we develop a distributed ADMM approach.
We proceed to split \eqref{eq:final_problem} as follows:
\begin{align*}
    \min_{\v{P},\v{\widetilde{R}},\{\v{R}_{ij},\v{Q}_{ij}\}}
    \sum_{i,j} \Big( P_{ij} C_{ij}(\v{R}_{ij},\v{Q}_{ij}) + H_{\gamma_2}(\v{Q}_{ij}) \Big)
    + H_{\gamma_1}(\v{P}) 
    \quad \text{s.t.} \quad
    \v{R}_{ij} = \v{\widetilde{R}},
    \enspace \forall i,j ,
\end{align*}
noting that the set constraints are omitted for brevity.
The augmented Lagrangian is given by
\begin{align*}
    \mathcal{L}_\mu
    = 
    &\sum_{i,j}
    \Big(
    P_{ij} C_{ij}(\v{R}_{ij},\v{Q}_{ij})
    + \langle \frac{\mu}{D} \v{\Lambda}_{ij} , \v{R}_{ij} - \v{R} \rangle
    + \frac{\mu}{2D} \| \v{R}_{ij} - \v{\widetilde{R}} \|_F^2 
    + H_{\gamma_2}(\v{Q}_{ij})
    \Big) 
    + H_{\gamma_1}(\v{P}) ,
\end{align*}
where $\mu>0$ is the ADMM parameter and $\{\Lambda_{ij}\}$ are Lagrange multipliers.
Full details of the update steps are included in the Supplementary Material.
The algorithm may be summarized in two steps (Alg.~\ref{algo:BWA}): (i) a distributed step that asks all cluster pairs to individually find their optimal transformations $\v{R}_{ij}$ in parallel, and (ii) a consensus step that aggregates all the locally estimated transformations according to a weighting that is proportional to correspondence strengths $P_{ij}$.

\begin{algorithm}
\caption{Hierarchical Wasserstein Alignment (HiWA) Algorithm}
\label{algo:BWA}
\begin{algorithmic}[1]
\Procedure {HierarchicalWassersteinAlignment}{$\gamma_1, \gamma_2, \mu, \{\v{X}_i\}_{i=1}^S, \{\v{Y}_j\}_{j=1}^S$}
\State $\v{R} \gets $ random $\V_{D,D}, \quad \v{P} \gets \ones_S \ones_S^\top / S^2, \quad \v{\Lambda}_{ij} \gets \v{0}, \enspace \forall i,j$
\Comment{Initialization}
\While {\text{not converged}} \label{algo:stopping1}
   \For {\text{all} $i,j$ \text{in parallel}}
      \State $\v{Q}_{ij} \gets \ones_{n_{x,i}} \ones_{n_{y,j}}^\top / n_{x,i} n_{y,j}$ 
      \While {\text{not converged}} \label{algo:stopping2}
         \State $\v{R}_{ij} \gets \textsc{StiefelAlignment}( 2 P_{ij} \v{Y}_j \v{Q}_{ij}^\top \v{X}_i^\top + \mu ( \v{R} - \v{\Lambda}_{ij} ) )$
         \label{algo:update_R_ij}
         \State $\v{Q}_{ij} \gets \textsc{Sinkhorn} (\gamma_2/P_{ij} , \v{C}(k,l) \gets \frac{1}{D} \norm{\v{R}_{ij}\v{X}_i(k) - \v{Y}_j(l)}_2^2)$
      \EndWhile
   \EndFor
   \State $\v{P} \gets \textsc{Sinkhorn} (\gamma_1, \v{C}(i,j) \gets C_{ij}(\v{R}_{ij},\v{Q}_{ij}))$
   \State $\v{R} \gets \textsc{StiefelAlignment}( \sum_{i,j} \v{R}_{ij} + \v{\Lambda}_{ij} )$
   \State $\v{\Lambda}_{ij} \gets \v{\Lambda}_{ij} + \v{R}_{ij} - \v{R}, \quad \forall i,j$
\EndWhile
\EndProcedure
\end{algorithmic}

\vspace{-2mm} \hrulefill \vspace{-4mm}

\begin{multicols}{2}

\begin{algorithmic}[1]
\Procedure {Sinkhorn} {$\gamma,\v{C}\in\reals^{m \times n}$}
\State $\v{K} \gets \exp(-\v{C}/\gamma), \quad \v{v} \gets \frac{\ones_n}{n}$
\While {\text{not converged}}
   \State $\v{u} \gets \frac{\ones_m}{m} \oslash \v{K}\v{v}$
   \State $\v{v} \gets \frac{\ones_n}{n} \oslash \v{K}^\top \v{u}$
\EndWhile
\State $\v{P} \gets \diag(\v{u}) \v{K} \diag(\v{v})$
\EndProcedure
\vspace{-7mm}
\end{algorithmic}

\columnbreak

\begin{algorithmic}[1]
\Procedure {StiefelAlignment} {$\v{A}$}
\State $(\v{U},\v{\Sigma},\v{V}) \gets \textsc{SVD} (\v{A})$
\State $\v{R} \gets \v{U}\v{V}^\top$
\EndProcedure

\end{algorithmic}

\scriptsize
\texttt{\\}
{\bf Notation:}\\
$\oslash$: elementwise division\\
$\exp(\cdot)$: elementwise exponential\\
$\diag(\cdot)$: diagonal matrix of argument
\end{multicols}
\vspace{-4mm}
\end{algorithm}

\textit{Parameters.}
Entropic parameters $\gamma_1,\gamma_2$ relax the one-to-one cluster correspondence assumption, balancing a trade off between alignment precision (small $\gamma$) and sample complexity (large $\gamma$).
Numerically, negative entropy adds strong convexity to the program, reducing sensitivity towards perturbations at the cost of a slower convergence rate.
The ADMM parameter $\mu$ controls the `strength' of the consensus, or from an algorithmic viewpoint, the gradient step size.

\textit{Distributed consensus.}
Update steps for $\v{Q}_{ij}, \v{R}_{ij},\v{L}_{ij}$ can be performed in parallel over all cluster pairs ($S^2$ in total), making it amenable for a distributed implementation. The runtime complexity of this algorithm is presented in the supplementary Materials.

\textit{Robustness against initial conditions.}
We intentionally build robustness against initial conditions by ordering updates for $\v{R}_{ij}$ and $\v{Q}_{ij}$ before $\v{P}$ such that when $\mu$ is sufficiently small, the ADMM sequence is influenced more by the data than by initial conditions.


\vspace{-2mm}
\section{Theoretical guarantees for cluster-based alignment}
\vspace{-2mm}
\label{sec:theory}

While the previous section explains \textit{how} to align clustered datasets, in this section, we aim to answer the question of \textit{when} and \textit{how well} they can be aligned.
We provide necessary conditions for \textit{cluster-based alignability} as well as \textit{alignment perturbation bounds} according to equation \eqref{eq:original_problem_short}'s formulation.
To simplify our analysis, we make the following assumptions:
(i) each of the clusters contain the same number of datapoints $n$,
(ii) the ground truth cluster correspondences are $\v{P}^\star = \v{I}_S / S$ (i.e., diagonal containing $1/S$). However, this analysis can be extended to the case where the number of points is unequal without loss of generality.
Detailed proofs are given in the Supp.~Material.

The following result is a criterion that, if met, ensures the existence of a global minimizer of the cluster-correspondence $\v{P}^\star$. This criterion requires that matched clusters must be closer in Wasserstein distance than mismatched clusters, according to a threshold determined by Wasserstein's sample complexity (i.e., an asymptotic rate dependent on the clusters' \textit{sample sizes} and \textit{intrinsic dimensions}).
Since these sample complexity results are based on the Wasserstein distance, we expect a less stringent criterion when using the Sinkhorn distance in \eqref{eq:final_problem} (due to superior sample complexity \cite{genevay2018sample}).

\begin{theorem} [Correspondence disambiguity criterion]
\label{thm:cluster_disambiguity_criterion}
Let all clusters be strictly low-rank where the dimension of the $i$-th cluster in the $x$-th dataset is $d_{x,i}$.
Let $d_{x,i},d_{y,j} > 4, \forall i,j \in \llbracket S \rrbracket$.
Define $\hat{C}^\star_{ij} := \min_{\v{R} \in \V_{D,D}, \v{Q}_{ij} \in \B_n} C_{ij} (\v{R},\v{Q}_{ij})$.
Problem \eqref{eq:original_problem_short} yields the solution $\v{P}^\star = \v{I}_S/S$ with probability at least $1-\delta$ if, $\forall i,j : i \neq j$, the following criterion is satisfied:
\begin{align*}
      \hat{C}_{ij}^\star
    + \hat{C}_{ji}^\star
    - \hat{C}_{ii}^\star
    - \hat{C}_{jj}^\star
    \, > \,
           B_{x,i}(\delta) + B_{y,i}(\delta)
         + B_{x,j}(\delta) + B_{y,j}(\delta) 
\end{align*}
\vspace{-2mm}
\begin{equation*}
    \text{where}~  B_{z,k}(\delta) :=   c_{z,k} n^{ -\frac{2}{d_{z,k}} }
                      + \sqrt{ \log (1/\delta) / 2 n } ,
    \qquad 
    c_{z,k} = 1458 \Big( 2 + \frac{1}{3^{d_{z,k}/2 - 2} - 1} \Big) .
\end{equation*}
\end{theorem}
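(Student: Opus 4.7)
My plan is to (i) fold the minimizations over $\v{R}$ and $\{\v{Q}_{ij}\}$ into the definition of $\hat{C}^\star_{ij}$ and recast the outer problem as a linear assignment on the Birkhoff polytope, (ii) identify $\v{P}^\star = \v{I}_S/S$ as the population optimum using the assumed ground-truth isometry, and (iii) transfer that optimality to the empirical setting by combining finite-sample Wasserstein concentration with population symmetry of the Procrustes cost, so that the stated criterion exactly absorbs the resulting fluctuations.

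\textbf{Setup and population picture.} With $\hat{C}^\star_{ij}$ fixed, $\min_{\v{P}\in\B_S}\sum_{i,j}P_{ij}\hat{C}^\star_{ij}$ is a linear program over the Birkhoff polytope; by Birkhoff--von Neumann it attains its minimum at a (scaled) permutation matrix, so it suffices to show $\sum_i \hat{C}^\star_{ii} < \sum_i \hat{C}^\star_{i,\sigma(i)}$ for every non-identity permutation $\sigma$. Let $C^\star_{ij}$ denote the population analogue of $\hat{C}^\star_{ij}$: the ground-truth unitary $\v{R}^\star$ realizes $C^\star_{ii}=0$, while $C^\star_{ij}>0$ for distinct clusters. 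A change-of-variable using invariance of $\W_2$ under a common orthogonal rotation of both marginals further gives $C^\star_{ij}=C^\star_{ji}$, so the population cost matrix is symmetric with zero diagonal and identity is trivially its minimizer.

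\textbf{Empirical concentration.} The technical core is to bound $|\hat{C}^\star_{ij}-C^\star_{ij}|$. The triangle inequality and orthogonal invariance of $\W_2$ give that the pairwise empirical cost at any fixed $\v{R}\in\V_{D,D}$ differs from its population counterpart by at most the sum of $\W_2$ distances between each empirical measure and its population limit -- uniformly in $\v{R}$. Passing through $\min_\v{R}$ preserves this, and upon squaring yields $|\hat{C}^\star_{ij}-C^\star_{ij}|\leq B_{x,i}(\delta)+B_{y,j}(\delta)$ with probability at least $1-\delta$, once we invoke finite-sample concentration of the empirical 2-Wasserstein. In the intrinsic-dimension regime $d>4$, the Weed--Bach expectation bound contributes the $c_{d}n^{-2/d}$ term with exactly the constant appearing in the statement, while McDiarmid's bounded-differences inequality contributes the $\sqrt{\log(1/\delta)/(2n)}$ deviation; a union bound over the $S^2$ cluster pairs (with $\delta$ rescaled) gives the overall $1-\delta$ probability.

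\textbf{Closing via approximate symmetry.} Substituting these bounds into the 2-swap difference $\hat{C}^\star_{ij}+\hat{C}^\star_{ji}-\hat{C}^\star_{ii}-\hat{C}^\star_{jj}$ shows that the theorem's right-hand side is exactly the four-fold fluctuation budget, and the criterion rearranges to what is needed to make the population gap strictly positive. The main obstacle is upgrading this pairwise guarantee to global optimality over all permutations, since for a generic cost matrix the 2-swap condition only certifies \emph{local} optimality in the assignment problem. The rescue is that our empirical cost matrix is \emph{approximately} symmetric: by population symmetry and the sample-complexity bounds, $|\hat{C}^\star_{ij}-\hat{C}^\star_{ji}|\leq B_{x,i}+B_{y,j}+B_{x,j}+B_{y,i}$. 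Summing the pairwise criterion around any $k$-cycle $(i_1,\dots,i_k)$ telescopes to $\mathrm{cost}(\sigma)+\mathrm{cost}(\sigma^{-1})>2\sum_l\hat{C}^\star_{i_l,i_l}+2\sum_l(B_{x,i_l}+B_{y,i_l})$, and the approximate-symmetry bound controls $|\mathrm{cost}(\sigma)-\mathrm{cost}(\sigma^{-1})|$ by $2\sum_l(B_{x,i_l}+B_{y,i_l})$; combining the two forces $\mathrm{cost}(\sigma)>\sum_l\hat{C}^\star_{i_l,i_l}$ regardless of which of $\sigma,\sigma^{-1}$ has the smaller value, which extends the pairwise condition to a global one. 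The remaining work is bookkeeping of constants to match the specific form $c_{z,k}=1458(2+1/(3^{d_{z,k}/2-2}-1))$, which is dictated by the dyadic covering construction underlying the Weed--Bach rate and forces the $d_{z,k}>4$ hypothesis.
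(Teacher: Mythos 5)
Your proposal is correct and shares the paper's overall two-part architecture --- reduce the outer problem to a linear program over the Birkhoff polytope whose optimum is a permutation, then lift the population-level optimality of the identity to the empirical setting via the Weed--Bach expectation bound ($c\,n^{-2/d}$ for intrinsic dimension $d>4$) plus a bounded-differences deviation term --- but it takes a genuinely different route through the combinatorial step. The paper proves a polytope-geometry lemma that only examines the vertices adjacent to $\v{P}^\star$ at Frobenius distance $2$ (i.e., transpositions) and then asserts that the pairwise condition $C_{ij}+C_{ji}-C_{ii}-C_{jj}>0$ suffices for global optimality of the identity; in the variational step it leans on the fact that matched clusters have \emph{zero} population cost, so the diagonal vanishes and the issue is moot in the limit. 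You instead flag, correctly, that for a generic (finite-sample, nonzero-diagonal) cost matrix the 2-swap condition certifies only local optimality in the assignment problem, and you repair this by exploiting the population symmetry $C^\star_{ij}=C^\star_{ji}$ (which follows from orthogonal invariance of $\W_2$ under the common ground-truth unitary), propagating it to approximate symmetry of the empirical costs, and telescoping the pairwise criterion around arbitrary $k$-cycles so that both $\sigma$ and $\sigma^{-1}$ are beaten by the identity. This symmetry-plus-cycle argument is not in the paper and genuinely strengthens the combinatorial step; the cost is that your guarantee now depends on the ground-truth-unitary assumption to supply symmetry, and your (correctly applied) union bound over cluster pairs would rescale $\delta$ and thus slightly perturb the stated constants. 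Two smaller points of agreement and caution: your attribution of the $\sqrt{\log(1/\delta)/2n}$ term to McDiarmid coincides with the paper's use of Weed--Bach's Proposition 20, and both you and the paper pass the triangle inequality through the \emph{squared} $2$-Wasserstein distance without accounting for the cross terms that squaring introduces, which would at worst inflate the constants.
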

\begin{proofsketch}
The proof contains two parts.
In the first part, we consider perturbation conditions of the cost matrix $\v{C}$ in a (non-variational) optimal transport program over the Birkhoff polytope.
To be unperturbed from $\v{P}^\star = \v{I}_S/S$, we require that $C_{ij} + C_{ji} - C_{ii} - C_{jj} > 0, \forall i,j : i \neq j$.
In the second part, we extend this condition to the the finite-sample regime by utilizing recently developed concentration bounds \cite{weed2017sharp} for the $p$-Wasserstein distance, which essentially raises the disambiguity lower bound due to finite-sample uncertainty. (Supp.~Material, Section 2)
\end{proofsketch}
\vspace{-2mm}

Now, even if we know the global correspondence  $\v{P}^\star$, we still do not have the full picture about the alignment's quality.
For example, all matching clusters may have very similar covariances, but principal angles between the clusters are ``distorted'' across the datasets.
Our next theorem gives us an upper bound on the alignment error (for unitary transformations), and makes precise the notion of \textit{global structure distortion}.

\begin{theorem} [Cluster-based alignment perturbation bounds]
\label{thm:cluster_based_perturbation_bounds}
Consider data matrices $\{ \v{X}_i,\v{Y}_i \in \reals^{D \times n} \}_{i=1}^c$ with known point-wise correspondence matrices $\{\v{Q}_{ii} \in \B_n\}_{i=1}^c$.
Define matrices
\begin{equation*}
    \v{X} := [\v{X}_1 \v{Q}_{11} , \v{X}_2 \v{Q}_{22} , \dots , \v{X}_c \v{Q}_{cc}] , 
    \qquad
    \v{Y} := [\v{Y}_1 , \v{Y}_2 , \dots , \v{Y}_{c}] .
\end{equation*}
Set $\varepsilon^2 := \norm{\v{Y}^\top\v{Y} - \v{X}^\top\v{X}}_F$.
If the criterion stated in theorem \ref{thm:cluster_disambiguity_criterion} is satisfied,
$\v{X}$ is full row rank, and $\varepsilon \| \v{X}^\dagger \|  \leq \frac{1}{\sqrt{2}} (\norm{\v{X}} \| \v{X}^\dagger \|)^{-1/2}$, then
\begin{align*}
    \min_{\v{P} \in \B_c , \v{R} \in \V_{D,D} } \sum_{i,j} P_{ij} C_{ij} (\v{R})
    \quad \leq \quad
    ( \norm{\v{X}} \| \v{X}^\dagger \| + 2 )^2 \| \v{X}^\dagger \|^2 \varepsilon^4 + D,
\end{align*}
where $D = \sum_{i=1}^c \tr (   \v{X}_i ( \v{I} / n - \v{Q}_{ii} \v{Q}_{ii}^\top ) \v{X}_i^\top + ( 1/n - 1 ) \v{Y}_i \v{Y}_i^\top )$ is a data-dependent constant.
\end{theorem}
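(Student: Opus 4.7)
The plan is to (i) reduce the bilevel problem to an orthogonal Procrustes problem by invoking Theorem~\ref{thm:cluster_disambiguity_criterion}, and (ii) bound the Procrustes residual via a polar-decomposition / Davis-Kahan perturbation analysis driven by the Gram discrepancy $\varepsilon^2 = \|\v{Y}^\top\v{Y} - \v{X}^\top\v{X}\|_F$. By the criterion hypothesis, Theorem~\ref{thm:cluster_disambiguity_criterion} certifies that $\v{P}^\star = \v{I}_c/c$ is the global minimizer over $\B_c$, so it suffices to control $\min_{\v{R} \in \V_{D,D}} \tfrac{1}{c}\sum_{i=1}^c C_{ii}(\v{R})$. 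Expanding $C_{ii}$ via $\|\v{R}\v{X}_i(k) - \v{Y}_i(l)\|_2^2 = \|\v{X}_i(k)\|^2 + \|\v{Y}_i(l)\|^2 - 2\langle \v{R}\v{X}_i(k),\v{Y}_i(l)\rangle$ and using the doubly-stochastic marginals $\v{Q}_{ii}\ones_n = \ones_n/n$, the cross term collapses (since $\sum_i \v{X}_i\v{Q}_{ii}\v{Y}_i^\top = \v{X}\v{Y}^\top$) to $-2\tr(\v{R}\v{X}\v{Y}^\top)$ while the auto-terms pack into precisely the data-dependent constant $\mathcal{D}$ appearing in the statement, giving $\sum_i C_{ii}(\v{R}) \propto \|\v{Y} - \v{R}\v{X}\|_F^2 + \mathcal{D}$. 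The task reduces to upper bounding $\min_{\v{R}} \|\v{Y} - \v{R}\v{X}\|_F^2$.

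For this Procrustes bound I would take the explicit candidate $\v{R}$ from the polar decomposition $\v{M} := \v{Y}\v{X}^\dagger = \v{R}\v{H}$ with $\v{H} = (\v{M}^\top\v{M})^{1/2}$. Since $\v{X}$ has full row rank, $\v{X}\v{X}^\dagger = \v{I}_D$, so $\v{H}^2 = \v{I}_D + \v{F}$ with $\v{F} := \v{X}^{\dagger\top}\v{E}\v{X}^\dagger$ and $\v{E} := \v{Y}^\top\v{Y} - \v{X}^\top\v{X}$; the hypothesis $\varepsilon\|\v{X}^\dagger\| \leq (2\|\v{X}\|\|\v{X}^\dagger\|)^{-1/2}$ forces $\|\v{F}\| \leq \|\v{X}^\dagger\|^2\varepsilon^2 \leq 1/2$, so the polar decomposition is stable and the matrix-square-root perturbation bound yields $\|\v{H} - \v{I}\|_F \leq \|\v{F}\|_F \leq \|\v{X}^\dagger\|^2 \varepsilon^2$. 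Splitting
\begin{equation*}
    \v{Y} - \v{R}\v{X} \;=\; \v{R}(\v{H} - \v{I})\v{X} + \v{Y}(\v{I} - \v{X}^\dagger\v{X}) ,
\end{equation*}
the first term contributes $O(\|\v{X}\|\|\v{X}^\dagger\|^2\varepsilon^2)$ directly. For the second (``subspace-mismatch'') piece, I would apply a Davis-Kahan sin-$\Theta$ argument to the pair $\v{X}^\top\v{X}$, $\v{Y}^\top\v{Y}$, whose spectral gap at $0$ is $\sigma_{\min}(\v{X})^2$: this gives $\|\sin\Theta\|_F \leq \|\v{X}^\dagger\|^2\varepsilon^2$, whence $\|\v{Y}(\v{I} - \v{X}^\dagger\v{X})\|_F \leq \|\v{Y}\|\|\sin\Theta\|_F \leq (\|\v{X}\| + \varepsilon)\|\v{X}^\dagger\|^2\varepsilon^2$ after using Weyl for $\|\v{Y}\|$. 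Summing the two pieces, squaring, and using $\|\v{X}^\dagger\|\varepsilon \leq 1$ to absorb the linear-in-$\varepsilon$ cross term produces $\|\v{Y} - \v{R}\v{X}\|_F^2 \leq (\|\v{X}\|\|\v{X}^\dagger\| + 2)^2\|\v{X}^\dagger\|^2\varepsilon^4$, and adding $\mathcal{D}$ delivers the stated inequality.

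\textbf{Main obstacle.} The principal technical obstacle is the coordinated perturbation in the last step: matching the prefactor $(\|\v{X}\|\|\v{X}^\dagger\| + 2)^2$ precisely requires careful bookkeeping so that the rotation error (from the $(\v{I} + \v{F})^{1/2}$ expansion) and the subspace-mismatch error (from Davis-Kahan together with Weyl on $\|\v{Y}\|$) each reduce to an $O(\|\v{X}^\dagger\|^2\varepsilon^4)$ contribution, and the $O(\|\v{X}^\dagger\|\varepsilon^3)$ cross term is re-absorbed using the hypothesis on $\varepsilon\|\v{X}^\dagger\|$. That same hypothesis does double duty: it guarantees $\v{M}^\top\v{M}$ is positive definite so the polar decomposition and the eigenvalue-wise square-root Lipschitz bound both apply cleanly, and it simultaneously keeps the Davis-Kahan gap bounded away from the perturbation so the sin-$\Theta$ bound is meaningful.
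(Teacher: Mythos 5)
Your plan follows the paper's architecture exactly in its first half: invoke Theorem~\ref{thm:cluster_disambiguity_criterion} to pin the correspondence to the identity, then expand $\sum_i C_{ii}(\v{R})$ using the marginals of $\v{Q}_{ii}$ so that the cross term becomes $-2\tr(\v{R}\v{X}\v{Y}^\top)$ and the leftover trace terms assemble into the constant $D$ --- this is precisely the computation in the paper's proof, and it is correct (both you and the paper silently drop the $1/D$ factor in the definition of $C_{ij}$ and the $1/c$ from $\v{P}^\star=\v{I}_c/c$). Where you diverge is the remaining step, $\min_{\v{R}\in\V_{D,D}}\norm{\v{R}\v{X}-\v{Y}}_F \le (\norm{\v{X}}\norm{\v{X}^\dagger}+2)\norm{\v{X}^\dagger}\varepsilon^2$: the paper does not prove this at all but cites it verbatim as Theorem~1 of Arias-Castro et al.~\cite{arias2018perturbation}, whereas you attempt a self-contained derivation via the polar decomposition of $\v{Y}\v{X}^\dagger$ and a Davis--Kahan argument. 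Your splitting $\v{Y}-\v{R}\v{X} = \v{R}(\v{H}-\v{I})\v{X} + \v{Y}(\v{I}-\v{X}^\dagger\v{X})$ and the bound $\norm{\v{H}-\v{I}}_F\le\norm{\v{X}^\dagger}^2\varepsilon^2$ for the first piece are sound, and that piece correctly accounts for the $\norm{\v{X}}\norm{\v{X}^\dagger}$ part of the prefactor.

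The gap is in the second piece. The stated constant allocates only $2\norm{\v{X}^\dagger}\varepsilon^2$ to the null-space mismatch term --- one power of $\norm{\v{X}^\dagger}$ and no factor of $\norm{\v{X}}$. Your route gives $\norm{\v{Y}(\v{I}-\v{X}^\dagger\v{X})}_F \le \norm{\v{Y}}\,\norm{\sin\Theta}_F \le (\norm{\v{X}}+\varepsilon)\norm{\v{X}^\dagger}^2\varepsilon^2$, which overshoots by roughly the condition number $\norm{\v{X}}\norm{\v{X}^\dagger}$; summing the two pieces then yields a prefactor near $2\norm{\v{X}}\norm{\v{X}^\dagger}+1$ rather than $\norm{\v{X}}\norm{\v{X}^\dagger}+2$, and no amount of absorbing cross terms via the hypothesis on $\varepsilon\norm{\v{X}^\dagger}$ recovers the stated bound, since the discrepancy grows without limit in the condition number. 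The repair is to avoid introducing $\norm{\v{Y}}$ altogether: since $\v{X}(\v{I}-\v{X}^\dagger\v{X})=\v{0}$, one has $\norm{\v{Y}(\v{I}-\v{X}^\dagger\v{X})}_F^2 = \tr\big((\v{I}-\v{X}^\dagger\v{X})(\v{Y}^\top\v{Y}-\v{X}^\top\v{X})(\v{I}-\v{X}^\dagger\v{X})\big)$, and this must be controlled directly; even then, extracting an $O(\norm{\v{X}^\dagger}^2\varepsilon^4)$ bound from $\norm{\v{Y}^\top\v{Y}-\v{X}^\top\v{X}}_F=\varepsilon^2$ alone is not immediate and is essentially the content of the cited theorem. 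Either reproduce that argument in place of the Davis--Kahan step, or cite \cite{arias2018perturbation} as the paper does; as written, your sketch proves a statement of the same form with a strictly worse constant than the one claimed.
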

\begin{proofsketch}
We utilize a recent perturbation result on the Procrustes problem (on a Frobenius norm objective) by Arias-Castro et al. \cite{arias2018perturbation} and adapt it to our squared 2-Wasserstein objective. (Supp.~Material, Section 3)
\vspace{-1mm}
\end{proofsketch}

Note that $\varepsilon$ plays a major role in the alignment error bound and quantifies the notion of \textit{global structure distortion}, which allows us to understand on how phenomena like covariate shift or misclustering impacts alignment.
To shed some light in this regard, we consider a simple analysis on a cluster-pair's error contribution to $\varepsilon$, denoted as $\varepsilon_{ij}$.
Consider the decomposition of the $(i,j)$-th block of the Gramians related to clusters $i$ and $j$, where their respective singular value decompositions are
$\v{X}_i \v{Q}_{ii} = \v{A}_i \v{\Sigma}_{x,i} \v{V}^\top$ and
$\v{Y}_j = \v{B}_j \v{\Sigma}_{y,j} \v{V}^\top$.
Defining the \textit{blockwise} error between clusters $i,j$ as
\begin{equation*}
    \varepsilon_{ij}
    :=
    \norm{ \v{Y}_i^\top \v{Y}_j - \v{Q}_{ij}^\top \v{X}_i^\top \v{X}_j \v{Q}_{jj} }_F
    =
    \norm{  \v{\Sigma}_{y,i} \v{B}_i^\top \v{B}_j \v{\Sigma}_{y,j} 
          - \v{\Sigma}_{x,i} \v{A}_i^\top \v{A}_j \v{\Sigma}_{x,j}  }_F ,
\end{equation*}
two components stand out: (i) \textit{angular shift}, which is characterized by differences in principal angles between $\v{B}_i^\top \v{B}_j$ and $\v{A}_i^\top \v{A}_j$, and (ii) \textit{spectral shift}, which is characterized by differences in spectra.

Finally, we show that the subspace configuration of a dataset's clusters can also affect alignment.
Pretend for a moment that external alignment information were present to aid in the disambiguation between two clusters.
The following lemma tells us when such information is useless (Proof in Supp.~Material, Section 4).

\begin{lemma}[Uninformative alignment]
\label{lemma:worst_case_alignment_scenario}
Consider clusters $\v{X}_i, \v{Y}_j \in \reals^{D \times n}$ and known point-wise correspondences $\v{Q}_{ij} \in \U(n,n)$.
Denote the left and right singular vectors of $\v{Y}_j \v{Q}_{ij}^\top \v{X}_i^\top$ associated with the non-zero singular values as $\tilde{\v{U}} , \tilde{\v{V}} \in \reals^{D \times r}$ with $r \leq D$.
Define the set of orthogonal transformations that are constrained to agree with known angular directions as
\begin{equation*}
    \T(\v{U}',\v{V}') :=
    \{ \v{R} \in \reals^{D \times D}_+ :
       \v{R}^\top \v{R} = \v{I} , \v{R}\v{V}' = \v{U}' \} ,
\end{equation*}
where $\v{U}',\v{V}' \in \V_{D,r}$ with $r \leq D$.
Given $\v{U}',\v{V}' \in \reals^{D \times r'}$ with $r' \leq D$, we have
\begin{equation}
\label{eq:constrained_Cij_inequality}
    \min_{\v{R} \in \T(\v{U}',\v{V}')} C_{ij}(\v{R})
    \geq
    \min_{\v{R} \in \V_{D,D}} C_{ij}(\v{R}) ,
\end{equation}
with equality holding when $\langle \tilde{\v{U}} , \v{U}' \rangle = \langle \tilde{\v{V}} , \v{V}' \rangle$. 
\end{lemma}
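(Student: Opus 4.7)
The starting point is to reduce the inner minimization to a standard orthogonal Procrustes problem. For any $\v{R} \in \V_{D,D}$ one has $\|\v{R}\v{X}_i(k) - \v{Y}_j(l)\|_2^2 = \|\v{X}_i(k)\|^2 + \|\v{Y}_j(l)\|^2 - 2\,\v{Y}_j(l)^\top \v{R}\v{X}_i(k)$, and because $\v{Q}_{ij} \in \U(n,n)$ has row and column sums $1/n$, the two quadratic terms contribute a constant in $\v{R}$. Using the cyclic property of trace, minimizing $C_{ij}(\v{R})$ over $\V_{D,D}$ is equivalent to maximizing $\tr(\v{R}\v{M}^\top)$, where $\v{M} := \v{Y}_j\v{Q}_{ij}^\top\v{X}_i^\top$ with SVD $\v{M} = \tilde{\v{U}}\tilde{\v{\Sigma}}\tilde{\v{V}}^\top$. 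The classical Procrustes bound gives $\tr(\v{R}\v{M}^\top) = \tr(\tilde{\v{\Sigma}}\,\tilde{\v{U}}^\top\v{R}\tilde{\v{V}}) \leq \tr(\tilde{\v{\Sigma}})$, since $\tilde{\v{U}}^\top\v{R}\tilde{\v{V}}$ has spectral norm at most one; equality holds precisely when $\v{R}\tilde{\v{V}} = \tilde{\v{U}}$. The inequality \eqref{eq:constrained_Cij_inequality} is then immediate from set containment, $\T(\v{U}',\v{V}') \subseteq \V_{D,D}$ (reading $\reals^{D\times D}_+$ in the definition as $\reals^{D\times D}$), because minimizing over a smaller feasible set can only raise the minimum.

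For the equality case I would interpret $\langle \tilde{\v{U}}, \v{U}'\rangle = \langle \tilde{\v{V}}, \v{V}'\rangle$ as the column-wise Gramian identity $\tilde{\v{U}}^\top\v{U}' = \tilde{\v{V}}^\top\v{V}'$, which is the natural isometry-compatible condition. The goal is to exhibit an orthogonal $\v{R}$ that is simultaneously an unconstrained minimizer, $\v{R}\tilde{\v{V}} = \tilde{\v{U}}$, and lies in $\T(\v{U}',\v{V}')$, i.e., $\v{R}\v{V}' = \v{U}'$. Stacking columns, the two constraints collapse into $\v{R}[\tilde{\v{V}},\v{V}'] = [\tilde{\v{U}},\v{U}']$. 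Because each of $\tilde{\v{V}}, \v{V}', \tilde{\v{U}}, \v{U}'$ has orthonormal columns, the hypothesis forces the $(r+r') \times (r+r')$ Gram matrices of the two stacks to be identical. A standard isometric extension argument then produces the required $\v{R} \in \V_{D,D}$: diagonalize the shared Gram matrix to obtain matching thin SVDs of the two stacks (same right factor and same singular values), map left singular basis to left singular basis, and extend by any orthogonal transformation on the orthogonal complement in $\reals^D$. Such $\v{R}$ lies in $\T(\v{U}',\v{V}')$ and attains the unconstrained optimum, so equality holds in \eqref{eq:constrained_Cij_inequality}.

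The main obstacle is the last construction when $[\tilde{\v{V}},\v{V}']$ is rank deficient, for example when $\tilde{\v{V}}$ and $\v{V}'$ share common directions; then a column-by-column prescription of $\v{R}$ may appear over-determined. The resolution is that rank deficiency of $[\tilde{\v{V}},\v{V}']$ matches that of $[\tilde{\v{U}},\v{U}']$ exactly because their Gram matrices coincide, so restricting to the reduced thin SVD on the common column space and extending arbitrarily on its orthogonal complement gives a well-defined orthogonal $\v{R}$; the two original constraints are then automatically consistent and simultaneously satisfied, completing the equality case.
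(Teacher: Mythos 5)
Your proof is correct, and for the inequality it follows essentially the same route as the paper: reduce $\min C_{ij}(\v{R})$ to maximizing $\tr(\v{R}\v{M}^\top)$ with $\v{M}=\v{Y}_j\v{Q}_{ij}^\top\v{X}_i^\top$, apply the von Neumann/Procrustes trace bound, and observe that $\T(\v{U}',\v{V}')\subseteq\V_{D,D}$ (the paper reaches the same conclusion by explicitly parameterizing the constrained set as $\v{R}=[\v{U}',\v{U}''][\v{V}',\v{V}'']^\top$, which is just your containment argument written in coordinates; your reading of $\reals^{D\times D}_+$ as a typo is also consistent with what the paper actually uses). Where you genuinely diverge is the equality case. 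The paper works backwards: it notes that equality in the trace inequality forces $\tilde{\v{V}}^\top\v{V}\v{U}^\top\tilde{\v{U}}=\v{I}$ and then derives the Gramian conditions from that, i.e.\ it establishes a consequence of equality rather than the sufficiency claimed in the lemma statement ("equality holding when $\langle\tilde{\v{U}},\v{U}'\rangle=\langle\tilde{\v{V}},\v{V}'\rangle$"). You instead prove sufficiency directly by constructing an $\v{R}$ that simultaneously satisfies $\v{R}\tilde{\v{V}}=\tilde{\v{U}}$ and $\v{R}\v{V}'=\v{U}'$, noting that the stacked constraint $\v{R}[\tilde{\v{V}},\v{V}']=[\tilde{\v{U}},\v{U}']$ is realizable by an orthogonal matrix exactly because the two Gram matrices coincide under the hypothesis $\tilde{\v{U}}^\top\v{U}'=\tilde{\v{V}}^\top\v{V}'$ (the diagonal blocks are identities automatically), with the rank-deficient case handled by matching reduced factorizations. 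This buys you a logically complete proof of the direction the lemma actually asserts, and it shows the secondary condition $\langle\tilde{\v{U}},\v{U}''\rangle=\langle\tilde{\v{V}},\v{V}''\rangle$ appearing at the end of the paper's argument is not needed as a hypothesis; the paper's version is shorter but leaves the sufficiency direction implicit.
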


Direct consequences of this lemma are the following:
When a dataset has equally-spaced subspaces, it has a maximally uninformative geometric configuration since angular information from other clusters (i.e., $\v{U}',\v{V}'$) can never increase the inter-cluster distance $C_{ij}$ (i.e., equality in \eqref{eq:constrained_Cij_inequality} always holds); it is hence a worst-case scenario for alignment.
This also explains why alignment in very high-dimensional space is harder: All subspaces may be orthogonal to each other, and hence offer no ``geometric'' advantage.


\vspace{-3mm}
\section{Numerical experiments}
\vspace{-3mm}

\subsection{Synthetic low-rank Gaussian mixture dataset}
\vspace{-3mm}

\begin{figure}[ht]
  \centering \small
  \includegraphics[width=0.98\textwidth]{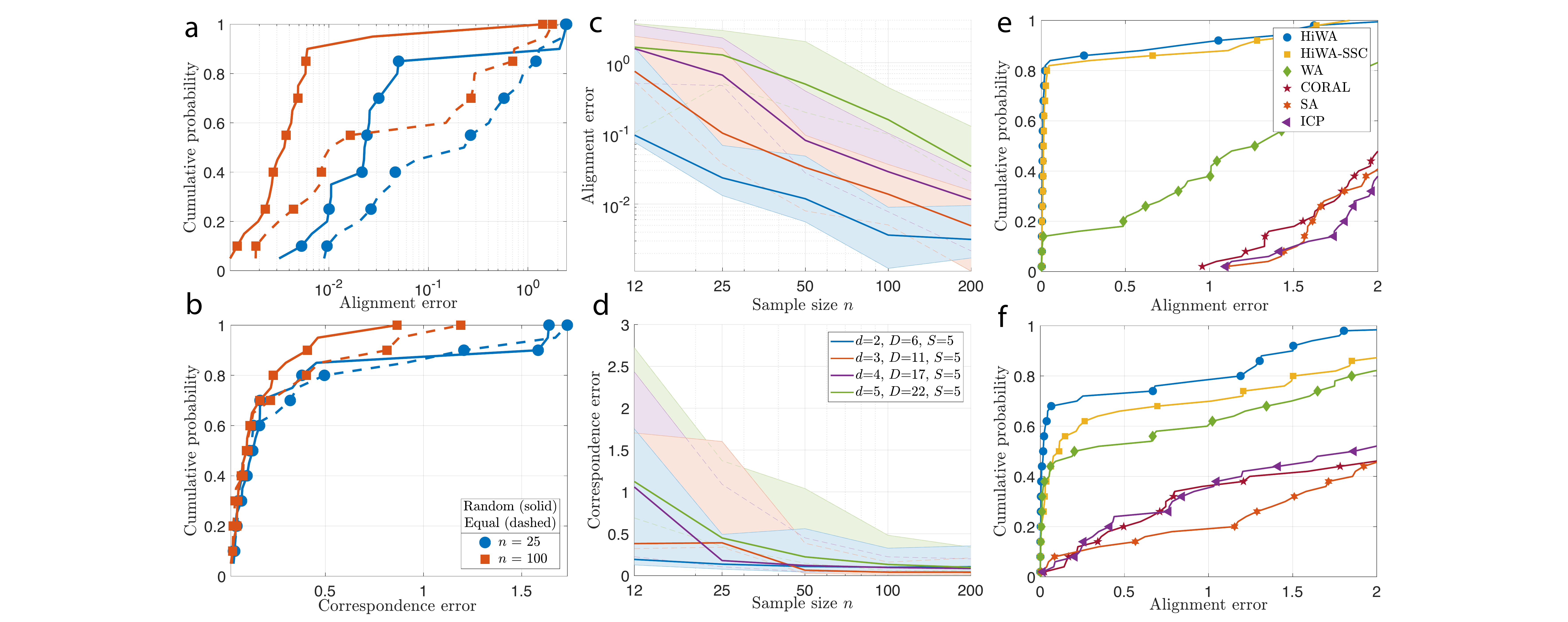}
  \vspace{-2mm}
  \caption{\footnotesize
  \textit{Synthetic experiments}.
  HiWA was tested in two subspace configurations (a,b):  randomly-spaced (average-case, solid) versus equally-spaced (worst-case, dashed) for $S=5,d=2,D=6,n=\{25,100\}$, where $S$ is the number of clusters, $d$ the dimension of each cluster, $D$ is the embedding dimension, and $n$ is the sample size.
  As we expect, performance in terms of the (a) alignment and (b) correspondence error is better in the average (vs. worst) case.
  In (c,d), we report (c) alignment and (d) correspondence errors as $d$ and $n$ varies, and report the error's 25$\textsuperscript{th}$/50$\textsuperscript{th}$/75$\textsuperscript{th}$ percentiles.
  In (e,f), we show ablation results (50 trials, no random restarts permitted) for semi-supervised HiWA (known clusters), \textit{completely} unsupervised HiWA-SSC (unknown clusters), non-structured Wasserstein alignment (WA), subspace alignment methods (SA \cite{fernando2013unsupervised}, CORAL \cite{sun2016return}), and iterative closest point (ICP) \cite{besl1992method} for $n=50$, $d=2$, and (e) $S=5$,$D=6$, and (f) $S=2$,$D=2$.
  }
  \label{fig:synthetic_data}
  \vspace{-5mm}
\end{figure}

In this section, we validate our method as well as demonstrate its limiting characteristics under symmetric-subspace and finite-sample regimes.
To generate our synthetic data, we repeat the following procedure for each of the $S$ clusters. We first randomly generate Gaussian distribution parameters $\mu_i \in \reals^d,\Sigma_i \in \reals^d : \Sigma_i \succeq 0$ (positive semi-definite), then randomly sample $n$ data-points from these parameters, and finally project them into a random subspace $\v{V}_i \in \reals^{D \times d}$ in a $D>d$ dimensional embedding.
In these experiments, we assume that the  clusters are known, but the cluster-correspondence across datasets is unknown. We measure performance with respect to two metrics:
(i) \emph{alignment error}, defined as the relative difference between the recovered versus true rotation acting on the data $\| \hat{\v{R}}\v{X} - \v{R}^\star\v{X} \|_F^2 / \| \v{R}^\star\v{X} \|_F^2 $, and
(ii) \emph{correpondence error}, defined as the sum of absolute differences between the recovered and the true correspondences $\sum_{ij} |\hat{\v{P}}-\v{P}^\star|_{ij}$.

To understand how global geometry impacts alignment,
we applied HiWA in two different settings (Figure \ref{fig:synthetic_data}a-b): (i) a {\em worst-case setting} where subspaces are equally spaced with a subspace similarity of $\|\v{V}_i^\top \v{V}_j\| = 1, \forall i \neq j$, and (ii) the {\em random setting} where subspaces are randomly selected from the Grassmann manifold. 
We observe that equally-spaced subspaces have significantly inferior performance when compared to randomly-spaced subspaces, providing some evidence that equally spaced subspaces are indeed the worst-case scenario in alignment, as suggested by Lemma \ref{lemma:worst_case_alignment_scenario}.

Next, we studied the effect of dimensions $d$ and sample size $n$ on the accuracy of alignment (Figure \ref{fig:synthetic_data} (c-d)). We tested HiWA across various dataset conditions by varying parameters $d=\{2,3,4,5\}$ and  $n=\{12,25,50,100,200\}$ while approximately maintaining the average subspace correlations (i.e., $\mathbb{E}\|\v{V}_i^\top\v{V}_j\| $) by fixing the cluster size $S=5$ and tuning $D$ to control the subspace spacing. In both cases, sample complexities are better than the theoretical rate of $O(n^{-1/d})$, which is likely due to the Sinkhorn distance's superior sample complexity.
In Figure \ref{fig:synthetic_data}e-f, we conduct an ablation study and evaluate our algorithm against benchmark methods in transfer learning and point set registration in two settings: a simple one in low-$d$ (e) and a harder one in higher-$d$ (f). Specifically, we compare HiWA when clusters are known (but pairwise correspondences are unknown), HiWA with clustering via sparse subspace clustering \cite{elhamifar2013sparse} (HiWA-SSC) to represent \textit{completely} unsupervised alignment, a Wasserstein alignment variant with no cluster-structure (WA) which is akin to OT Procrustes \cite{rangarajan1997softassign, zhang2017earth, alvarez2018towards, grave2018unsupervised}, subspace alignment \cite{fernando2013unsupervised}, correlation alignment \cite{sun2016return}, and iterative closest point (ICP) \cite{besl1992method}.
HiWA exhibits strongest performance, with HiWA-SSC trailing closely behind (since clusters are independently resolved), followed by WA, then other algorithms.
Subspace alignment methods have remarkably poor performance in higher dimensions due to their inability to resolve subspace sign ambiguities, while ICP demonstrates its notorious dependence on good initial conditions.
These results indicates HiWA's strong robustness against initial conditions and good scaling properties.

\vspace{-3mm}
\subsection{Neural population decoding example}
\vspace{-3mm}
Decoding intent (e.g., where you want to move your arm) or evoked responses (e.g., what you are looking at or listening to) directly from neural activity is a widely studied problem in neuroscience, and the first step in the design of a brain machine interface (BMI). A critical challenge with BMIs is that neural decoders need to be recalibrated (or re-trained) due to drift in neural responses or electrophysiology measurements/readouts \cite{pandarinath2018latent}. A recent method for semi-supervised brain decoding finds a transformation between projected neural responses and movements by solving a KL-divergence minimization problem \cite{dyer2017cryptography}. Using this approach, one could build robust decoders that work across days and shifts in neural responses through alignment.

\begin{figure}[t!]
  \centering
  \includegraphics[width=0.93\textwidth]{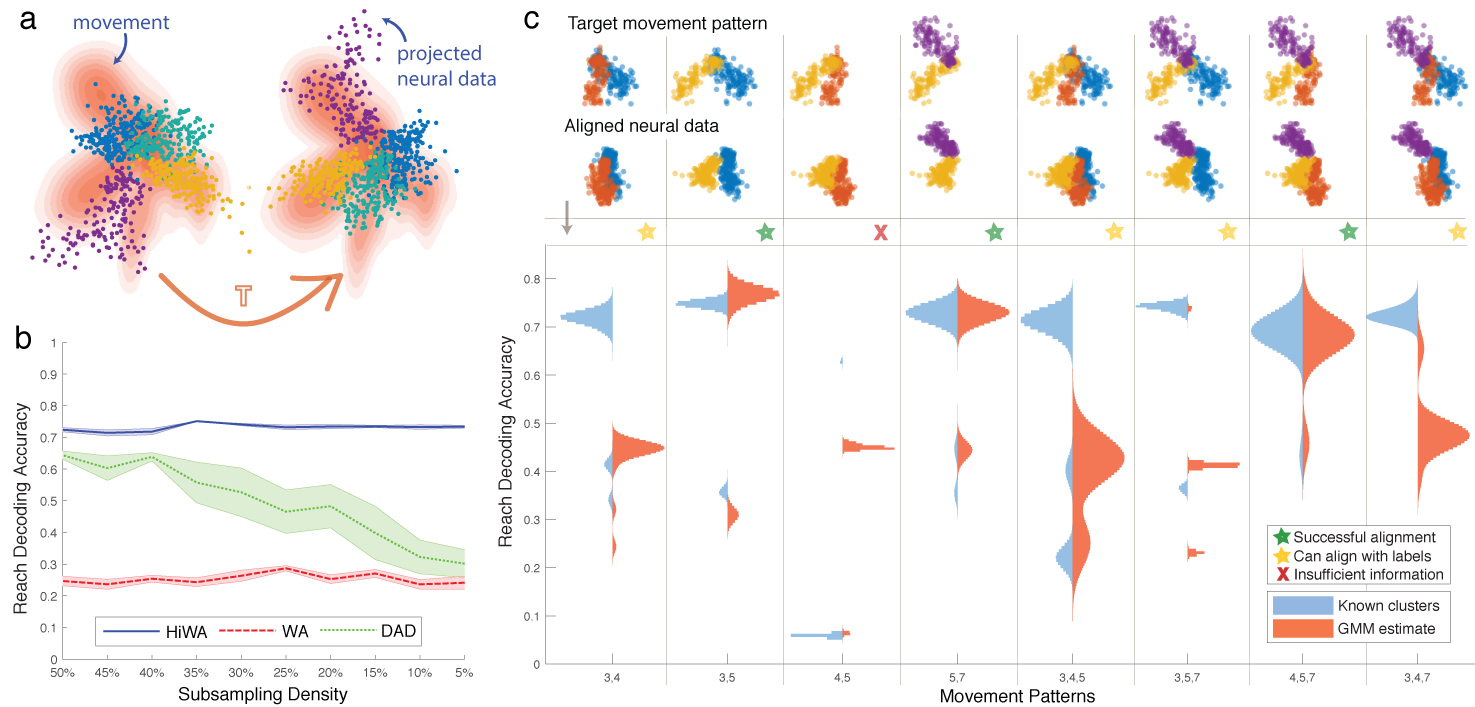}
  \vspace{-1mm}
  \caption{\footnotesize
  {\em Results on neural decoding dataset:}~ How distribution alignment is used to translate neural activity into movement -- low-dimensional embeddings of neural data are aligned with target movement patterns (a). In (b), we compare the performance (cluster correspondence) of HiWA, WA, and DAD as the number of points in the source dataset decreases. Next, we compared the performance of HiWA with known and estimated clusters (via GMM). Movement patterns in which cluster separability is high and the geometry is preserved across datasets, can be aligned in both cases (green stars). Patterns where separability is low but geometry is useful can be aligned when the cluster arrangements are known are denoted with yellow stars.
  }
  \vspace{-6mm}
  \label{fig:neural_data}
\end{figure}

We test the utility of hierarchical alignment for neural decoding on datasets collected from the arm region of primary motor cortex of a non-human primate (NHP) during a center out reaching task \cite{dyer2017cryptography}. After spike sorting and binning the data, we applied factor analysis to reduce the data dimensionality to 3D (source distribution) and applied HiWA to align the neural data to a 3D movement distribution (target distribution) (Figure \ref{fig:neural_data}). We compared its performance to (procrustes) Wasserstein alignment (WA) without hierarchical structure, and a baseline brute force search method called distribution alignment decoding (DAD) \cite{dyer2017cryptography}. We examined the prediction accuracy of the target reach direction for the motor decoding task (i.e., the cluster classification accuracy).

Next, we examined the impact of the sampling density (Figure \ref{fig:neural_data}b) on alignment performance. Our results demonstrate that HiWA continues to produce consistent cluster correspondences (> 70\% accuracy), even as the number of samples per cluster drops to 8.
In comparison, DAD is competitive at larger sample sizes but its performance rapidly drops off as sampling density decreases because it requires estimating a distribution from samples. WA suffers from the presence of many local minima and fails to find the correct cluster correspondences. Our results suggest that HiWA consistently provides stable solutions, outperforming competitor methods for this application.

Finally, to study the impact of local and global geometry on whether an unlabeled source and target can be aligned, we applied HiWA to permutations of eight subsets of reach directions (movement patterns). When just two reach directions are considered (Figure \ref{fig:neural_data}c, Columns 1-4), global geometry becomes useless in determining the correct rotation. In this case, we observe that HiWA is only capable of consistent alignment when cluster asymmetries are sufficiently extreme in both the source and target. When three reach directions are considered (Figure \ref{fig:neural_data}c, Columns 5-8), the global geometry can be used, yet there still exist symmetrical cases where recovering the correct rotation is difficult without adequate local asymmetries or some supervised (labeled) data to match clusters. These results suggest that hierarchical structure can be critical in resolving ambiguities in alignment of globally symmetric movement distributions.


\vspace{-3mm}
\section{Conclusion}
\vspace{-3mm}

This paper introduces a new method for hierarchical alignment with Wasserstein distances, provided an efficient numerical solution with analytical guarantees. We tested our method and compared its performance against other methods on a synthetic mixture model dataset and on a real neural decoding dataset. Future directions include extensions to non-rigid transformations, and applications to higher dimensional neural datasets that do not rely on external measured behavioral covariates.

\subsubsection*{Acknowledgments}
\vspace{-2mm}
JL was supported by DSO National Laboratories of Singapore, ED and MD were supported by NSF grant IIS-1755871, and CR was supported by NSF grant CCF-1409422 and CAREER award CCF-1350954.

\small

\bibliographystyle{unsrt}
\bibliography{refs}


\newpage
\section*{Supplementary Materials}

\section{ADMM algorithmic details}

\subsection{Derivation of ADMM update steps}

ADMM admits the following sequence of updates:
\begin{align}
    (\v{R}\iternext_{ij} , \v{Q}\iternext_{ij} )
    &\leftarrow
    \argmin{\substack{\v{R}_{ij} \in S(D,D) \\ \v{Q}_{ij} \in \U(n_{x,i},n_{y_j})}}
    P\itercur_{ij} C_{ij}(\v{R}_{ij},\v{Q}_{ij}) 
    + \frac{\mu}{2D} \| \v{R}_{ij} - \v{R}\itercur + \v{\Lambda}\itercur_{ij} \|_F^2
    + H_{\gamma_2}(\v{Q}_{ij}) ,
    \label{eq:ADMM_Rij_Qij_update} \\
    \v{P}\iternext
    &\leftarrow
    \argmin{\v{P} \in \B_c}
    \sum_{i,j} P_{ij}
      C_{ij}(\v{R}\iternext_{ij},\v{Q}\iternext_{ij})
    + H_{\gamma_1}(\v{P}) ,
    \label{eq:ADMM_P_update} \\
    \v{R}\iternext
    &\leftarrow
    \argmin{\v{R} \in S(D,D)}
    \sum_{i,j} \| \v{R}\iternext_{ij} - \v{R} + \v{\Lambda}\itercur_{ij} \|_F^2 ,
    \label{eq:ADMM_R_update} \\
    \v{\Lambda}\iternext_{ij}
    &\leftarrow
    \v{\Lambda}\itercur_{ij} + \v{R}\iternext_{ij} - \v{R} .
    \label{eq:ADMM_Lij_update}
\end{align}

Update \eqref{eq:ADMM_Rij_Qij_update} involves an alternating minimization over $\v{Q}_{ij}$ and $\v{R}_{ij}$ whereby the first variable is fixed while the second is minimized, followed by the second fixed and the first minimized, and the procedure is repeated until convergence is achieved.
When solving for $\v{R}_{ij}$ we have the following Stiefel manifold optimization:
\begin{align*}
    \v{R}\iternext_{ij}
    &\leftarrow
    \argmin{\v{R}_{ij} \in S(D,D)}
    P\itercur_{ij} C_{ij}(\v{R}_{ij},\v{Q}_{ij}) +
    \frac{\mu}{2D} \| \v{R}_{ij} - \v{R}\itercur + \v{\Lambda}\itercur_{ij} \|_F^2 \\
    &=
    \argmax{\v{R}_{ij} \in S(D,D)}
    \tr \Big(
    \big( 2 P\itercur_{ij} \v{Y}_j \v{Q}_{ij}^\top \v{X}_i^\top 
          + \mu ( \v{R}\itercur - \v{\Lambda}\itercur_{ij} ) \big)
    \v{R}_{ij}^\top
    \Big)
    = \v{U}\v{V}^\top ,
\end{align*}
where $2 P\itercur_{ij} \v{Y}_j \v{Q}_{ij}^\top \v{X}_i^\top + \mu ( \v{R}\itercur - \v{\Lambda}\itercur_{ij} ) = \v{U}\v{D}\v{V}^\top$ is its SVD.
We employ the Sinkhorn algorithm (Algorithm 1 of \cite{cuturi2013sinkhorn}) to solve for $\v{Q}_{ij}$, using an entropic parameter of $\gamma_2/P\itercur_{ij}$ and uniform marginals.
$\v{R}\iternext_{ij}$ and $\v{Q}\iternext_{ij}$ are retrieved once the alternating minimization converges.

Update \eqref{eq:ADMM_P_update} also employs the Sinkhorn algorithm over the cost matrix generated by $C_{ij}(\v{R}\iternext_{ij},\v{Q}\iternext_{ij})$ using variables found in update \eqref{eq:ADMM_Rij_Qij_update}, along with an entropic parameter of $\gamma_1$ and uniform marginals.

Update \eqref{eq:ADMM_R_update} is a consensus update over a Stiefel manifold optimization:
\begin{align*}
    \v{R}\iternext
    \leftarrow
    \argmin{\v{R} \in S(D,D)}
    \sum_{i,j} \| \v{R}\iternext_{ij} - \v{R} + \v{\Lambda}\itercur_{ij} \|_F^2
    = \v{U} \v{V}^\top ,
\end{align*}
where $\sum_{i,j} \v{R}\iternext_{ij} + \v{\Lambda}\itercur_{ij} = \v{U}\v{D}\v{V}^\top$ is its SVD.

\subsection{Computational complexity of distributed ADMM algorithm}

The main computational complexity of the ADMM algorithm comes from line 8, which demands the solving of $S^2$ Sinkhorn problems per ADMM iteration.
Fortunately, these Sinkhorn problems (i.e., update steps for $\v{Q}_{ij}, \v{R}_{ij},\v{L}_{ij}$) may be conducted in parallel, making it amenable for a distributed implementation. When fully parallelized, the algorithm has a per-iteration computational complexity of $O(n_i n_j)$, where $n_i,n_j$ refers to the number of points in the largest clusters of $\v{X},\v{Y}$ respectively (compared to vanilla Sinkhorn's $O(n_x n_y)$ complexity where $n_x,n_y$ refers to the \emph{total} number of points in respective datasets, assuming $D \ll \max(n_i,n_j)$). In Figure~\ref{fig:runtime_parallelism}, we record run times of two versions of the same algorithm, with and without parallelism, operating on a similar test dataset, at varying cluster sizes $S$ (50 data points per cluster). 10 random trials were conducted and the means and standard deviations were reported (error lines). As expected, the parallel implementation scales better. These results were computed using an i7 intel processor with 6 cores clocked at 2.6 GHz.

\begin{figure}[h!]
  \centering
  \includegraphics[width=0.5\textwidth]{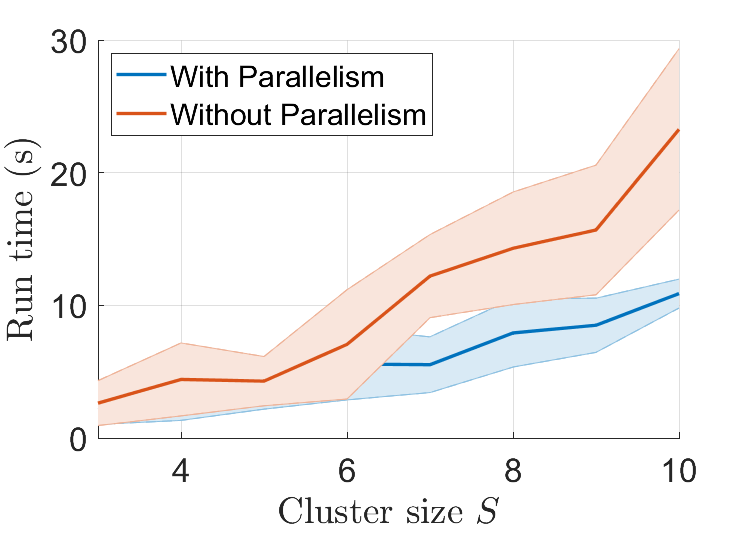}
  \vspace{-4mm}
  \caption{\small
  {\em Run time comparison between a parallelized and a non-parallelized implementation:}~ We varied the number of clusters -- the main bottleneck to run time - to compare the runtimes of the parallelized and non-parallelized implementations.
  }
  \vspace{-5mm}
  \label{fig:runtime_parallelism}
\end{figure}

\subsection{Stopping criteria.}

In lines \ref{algo:stopping1} and \ref{algo:stopping2} of Algorithm \ref{algo:BWA}, possible stopping criteria are (i) $\norm{\v{R}\iternext - \v{R}\itercur}_F \leq \tau$ where the difference is between the current and previous iteration's transformation and $\tau$ is the tolerance, and (ii) $t \leq T$ where $T$ is the maximum number of iterations.


\section{Proof of Theorem \ref{thm:cluster_disambiguity_criterion}}

\subsection{Part 1: Geometric perturbations conditions over the Birkhoff polytope}

First, we provide the following lemma that illuminates some basic geometrical insights of the general OT cost matrix, whose coupling is in the Birkhoff polytope $\B_c := \{ \v{P} \in \reals^{c \times c} : \v{P} \ones_c = \v{P}^\top \ones_c = \ones_c \}$. We define the OT program with respect to a cost matrix $\v{C}$ as
\begin{equation} \label{eq:general_LP}
    L(\v{C}) := \argmin{\v{P} \in \B_c} \ip{\v{P}}{\v{C}} .
\end{equation}
The following lemma describes the conditions on $\v{C}$ for $\v{P}$ to remain unperturbed at $\v{P}^\star$.

\begin{lemma} \label{lemma:cost_matrix_geometry}
Define the set of vertices on the $c$-th Birkhoff polytope $\B_c$ that are within a $\delta$-ball from $\v{P}^\star$ as 
\begin{equation} \label{eq:Birkhoff_vertices}
    \P_\delta (\v{P}^\star) =
    \{ \v{P} \in \B_c \setminus \{ \v{P}^\star \} :
       \norm{\v{P} - \v{P}^\star}_F \leq \delta \} .
\end{equation}
Define the set of matrices that denote directions from $\v{P}^\star$ to each neighboring vertex $\v{P}_i$ as
\begin{equation} \label{eq:Birkhoff_edges}
    \V_\delta (\v{P}^\star) =
    \{ \v{V} \in \reals^{c \times c} :
       \v{V} = \v{P} - \v{P}^\star , \enspace
       \v{P} \in \P_\delta(\v{P}^\star) \} .
\end{equation}
For the linear program's solution $\v{P}^\star = L(\v{C})$ to remain unchanged, $\forall \v{V} \in \V_\delta (\v{P}^\star)$ for $\delta = 2$, the cost matrix $\v{C}$ should satisfy
\begin{equation} \label{eq:Birkhoff_cost_condition}
    \ip{\v{C}}{\v{V}} > 0 .
\end{equation}
\end{lemma}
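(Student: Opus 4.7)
The plan is to leverage the Birkhoff--von Neumann theorem together with the vertex-optimality characterization of linear programs over polytopes. I would begin by noting that the vertices of $\B_c$ are exactly the $c!$ permutation matrices, so the LP $L(\v{C}) = \argmin{\v{P} \in \B_c} \ip{\v{P}}{\v{C}}$ attains its minimum at a permutation matrix; requiring this minimum to be uniquely $\v{P}^\star$ is then equivalent to the condition $\ip{\v{C}}{\v{P} - \v{P}^\star} > 0$ for every other vertex $\v{P}$ of the polytope. Specializing this inequality to vertices inside $\P_2(\v{P}^\star)$ will give the lemma.

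The next step is therefore to identify which vertices $\v{P}$ lie in the Frobenius $2$-ball $\P_2(\v{P}^\star)$. Writing $\v{P} = \v{P}^\star \v{\Pi}$ for a non-identity permutation matrix $\v{\Pi}$ and decomposing $\v{\Pi}$ into disjoint cycles of lengths $k_1, \dots, k_m \geq 2$, a direct count shows that exactly $2\sum_i k_i$ entries of $\v{P}$ differ from those of $\v{P}^\star$ (each by magnitude $1$), so $\norm{\v{P} - \v{P}^\star}_F = \sqrt{2 \sum_i k_i}$. The minimum over non-identity permutations is $2$, attained precisely when $\v{\Pi}$ is a single transposition ($m = 1$, $k_1 = 2$). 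Hence $\V_2(\v{P}^\star)$ consists exactly of the edge directions $\v{V} = \v{P}^\star \v{\Pi}_\tau - \v{P}^\star$ for transpositions $\v{\Pi}_\tau$, which are the vertex neighbors in the $1$-skeleton of $\B_c$ closest to $\v{P}^\star$.

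With this characterization in hand, the lemma follows by applying the vertex-optimality condition $\ip{\v{C}}{\v{P} - \v{P}^\star} > 0$ to each transposition-neighbor of $\v{P}^\star$: any $\v{V} \in \V_2(\v{P}^\star)$ is of this form, so $\ip{\v{C}}{\v{V}} > 0$ is immediate whenever $\v{P}^\star$ is the unique LP minimizer. Packaging everything, the proof is a two-line chain: Birkhoff--von Neumann reduces optimality to a check against the finite set of vertex directions, and the cycle-length count pinpoints $\delta = 2$ as the threshold that isolates the transposition-directions.

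The main subtlety (rather than obstacle) lies in the lemma's logical direction. The argument above establishes the \emph{necessity} of $\ip{\v{C}}{\v{V}} > 0$ on $\V_2(\v{P}^\star)$ for $\v{P}^\star = L(\v{C})$. A full sufficiency statement would additionally require ruling out minimizers at non-transposition vertices, which in general cannot be reduced to the transposition directions alone because the cone generated by $\V_2(\v{P}^\star)$ is strictly smaller than the full feasible cone at $\v{P}^\star$ (in $\B_3$, for instance, a $3$-cycle direction cannot be written as a non-negative combination of transposition differences). I would flag this as the gap that must be bridged by the additional structure of the Wasserstein cost matrix when the lemma is invoked inside Theorem~\ref{thm:cluster_disambiguity_criterion}, rather than trying to close it at the level of the lemma itself.
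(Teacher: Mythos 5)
Your necessity argument is correct, and it is actually more careful than the paper's on the one point that needs care: the cycle-decomposition count $\norm{\v{P}-\v{P}^\star}_F = \sqrt{2\sum_i k_i}$ cleanly isolates single transpositions as the only vertices at Frobenius distance $2$, where the paper simply asserts that the nearest neighbours are column swaps. Read literally ("for $\v{P}^\star$ to remain the solution, $\v{C}$ should satisfy\dots"), the lemma is a necessary condition, and your two-line derivation from vertex optimality over the Birkhoff polytope settles it in essentially the same way the paper sets things up.

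The substantive difference is in what happens next, and here your final paragraph puts its finger on a real problem rather than a stylistic one. The paper's own proof does not stop at necessity: it restricts $\v{C}$ and $\v{P}$ to the four entries touched by a given transposition direction, parameterizes the restricted $2\times 2$ coupling by $t\in[0,1]$ along the edge joining the identity to the swap, shows $t=1$ wins exactly when $C_{ii}+C_{jj} < C_{ij}+C_{ji}$, and then concludes that if this holds for all $\v{V}\in\V_2(\v{P}^\star)$ then $\v{P}^\star=\v{I}$ must be the global minimizer. That last sentence is precisely the leap you flag: optimality is verified only against the transposition neighbours and silently extended to all $c!$ vertices, even though (as you note) the cone generated by $\V_2(\v{P}^\star)$ does not contain the longer-cycle directions. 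The gap is not cosmetic. For $c=3$ take $C_{11}=C_{22}=C_{33}=1$, $C_{12}=C_{23}=C_{31}=0.9$, $C_{21}=C_{32}=C_{13}=2$: every pairwise condition $C_{ij}+C_{ji}-C_{ii}-C_{jj}>0$ holds, yet the $3$-cycle assignment has cost $2.7<3$, so $L(\v{C})\neq\v{I}$. Hence the sufficiency form actually invoked downstream (Corollary 2.2, and through it Theorem 1) does not follow from checking the transposition directions alone; one needs either a criterion quantified over all permutations or some additional structure on the Wasserstein cost matrix. In short, your proof of the lemma as stated is sound and parallels the paper's setup, and the sufficiency gap you defer is genuinely present in the paper's own argument rather than something you failed to close.
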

\begin{proof}
Birkhoff-von Neuman's theorem \cite{von1953certain} states that the optimal transport solution must lie on the convex hull of the $c$-th Birkhoff polytope $\B_c$, and that its vertices are in fact permutation matrices.
We therefore say that an LP solution $\v{P}^\star$ is a vertex on $\B_c$.
The outline of this proof is straightforward: so that $\v{P}^\star$ remains unchanged, $\v{C}$ should not cause $\v{P}^\star$ to move to an adjacent edge of the Birkhoff polytope, nor should it cause it to extend beyond its adjacent edge because then, the neighboring vertex would assume the new solution.
For the rest of this proof, we shall let $\v{P}^\star = \v{I}$ without any loss of generality.
We define the set of nearest neighbors to $\v{P}^\star$, which are simply permutation matrices that can be described as taking $\v{P}^\star$ and exchanging any two columns.
For a $\v{P}$ with any two columns of $\v{P}^\star$ exchanged, notice that the difference matrix $\v{V} = \v{P} - \v{P}^\star$ is a symmetric matrix of mostly zeros except for two off-diagonal $+1$ entries and two diagonal $-1$ entries, hence $\norm{\v{P}-\v{P}^\star}_F = 2$.
Formally, we describe the set of nearest neighbors with $\P_\delta (\v{P}^\star)$ with $\delta = 2$, defined by \eqref{eq:Birkhoff_vertices}.
Next, we define edges that are adjacent to $\v{P}^\star$ using the set $\V_\delta (\v{P}^\star)$ also with $\delta = 2$, defined by \eqref{eq:Birkhoff_edges}.
Note that there are $K := {c \choose 2} - 1 = \frac{c(c-1)}{2} - 1$ neighboring vertices and adjacent edges.

We will now show how perturbing $\v{C}$ in just one direction $\v{V}$ changes $\v{P}^\star$.
First, consider a cost matrix that produce $\v{P}^\star$, which is defined as $\v{C}^\star := \sum_{\v{V}_i \in \V_\delta (\v{P}^\star)} \v{V}_i$, meaning that it is equi-angle from all $\v{V} \in \V_\delta (\v{P}^\star)$, for $\delta = 2$.
By enumerating over all $\v{V}$, we may derive $\v{C}^\star = \ones\ones^\top - c \v{I}$, along with the fact that $\frac{\ip{\v{C}^\star}{\v{V}}}{\norm{\v{C}^\star}_F \norm{\v{V}}_F} = \frac{1}{\sqrt{c-1}}$.
Define $\v{P}_1 = L(\v{C}_1)$ with $\v{C}_1$ such that $\ip{\v{V}_1}{\v{C}_1} \leq 0$ and $\frac{\ip{\v{V}_i}{\v{C}_1}}{\norm{\v{V}_i}_F \norm{\v{C}_1}_F} = \frac{1}{\sqrt{c-1}}$, $\forall i \in 2, \dots, K$.
Define $\v{P}_2 = L(\v{C}_2)$ with $\v{C}_2$ such that $\frac{\ip{\v{V}_i}{\v{C}_2}}{\norm{\v{V}_i}_F \norm{\v{C}_2}_F} = \frac{1}{\sqrt{c-1}}$, $\forall i \in \llbracket K \rrbracket$.
We make the claim that $\v{P}_1 = \v{P}_2$ and proceed with a proof by contradiction.
As mentioned before, $\forall \v{V} \in \V_\delta (\v{P}^\star )$ for $\v{P}^\star = \v{I}$, $\delta = 2$ has exactly four non-zero entries, i.e., $V_{ii} = V_{jj} = -1$ and $V_{ij} = V_{ji} = +1$, where $i,j \in \llbracket c \rrbracket, i \neq j$.
Writing out $\ip{\v{V}}{\v{C}}$ explicitly, we have $\ip{\v{V}}{\v{C}} = -(C_{ii} + C_{jj}) + (C_{ij} + C_{ji})$.
To ensure that $\v{C}_1$ and $\v{C}_2$ does not interact with other edges of the polytope, we fix $\ip{\v{V}_i}{\v{C}} = \ip{\v{V}_i}{\v{C}^\star} = c$ for $i=2,\dots,K$.
Since we constructed $\v{C}_1$ and $\v{C}_2$ to differ only by the condition $\ip{\v{V}_1}{\v{C}_1} \leq 0$ or $\ip{\v{V}_1}{\v{C}_2} > 0$, and any $\v{V}_i$ affects only four entries of the cost matrix, we may greatly simplify our analysis of $\v{C}$ and $\v{P}$ to only these four entries.
As such, we extract these four entries of $\v{P}$, represent it using $\v{\hat{P}} \in \B_2$, and parameterize using $t \in [0,1]$ it as
\begin{equation}
    \v{\hat{P}}(t) =
    t \begin{bmatrix} 1 & 0 \\ 0 & 1 \end{bmatrix} +
    (1-t) \begin{bmatrix} 0 & 1 \\ 1 & 0 \end{bmatrix} .
\end{equation}
With this parameterized form, we may reexpress the optimization for $L(\v{C})$ as
\begin{equation*}
    \min_{t \in [0,1]} \enspace
    t(C_{ii} + C_{jj}) + (1-t) (C_{ij} + C_{ji}) 
    =
    \min_{t \in [0,1]} \enspace
    \Big(\frac{C_{ii} + C_{jj}}{C_{ij} + C_{ji}} - 1 \Big) t
\end{equation*}
The above minimization has three cases.
If $C_{ii} + C_{jj} = C_{ij} + C_{ji}$ then there exists no unique solution $t$.
If $C_{ii} + C_{jj} > C_{ij} + C_{ji}$ then $t = 0$.
If $C_{ii} + C_{jj} < C_{ij} + C_{ji}$ then $t = 1$.
The first two cases directly corresponds to $\ip{\v{V}_1}{\v{C}_1} \leq 0$, while the third case corresponds to $\ip{\v{V}_1}{\v{C}_2} > 0$.
The fact that $t$ is not consistent between all cases demonstrates a contradiction.
Moreover, $t=1$ produces the solution $\v{\hat{P}} = \v{I}$, and if this holds for all $\v{V} \in \V_\delta (\v{P}^\star)$, then all off-diagonal entries must be zero and therefore $\v{P}^\star = \v{I}$ must be the minimizer for $L(\v{C})$.
\end{proof}

A direct consequence of lemma \ref{lemma:cost_matrix_geometry} is the following.

\begin{corollary}
\label{cor:correspondence_disambiguity_rule}
The solution to the linear program defined by \eqref{eq:general_LP} is $\v{P}^\star = \v{I}$ if the linear cost matrix $\v{C}$ satisfies the following property
\begin{equation} \label{eq:correspondence_disambiguity}
    C_{ij} + C_{ji} - C_{ii} - C_{jj} > 0, \enspace
    \forall i,j \in \llbracket c \rrbracket, i \neq j .
\end{equation}
\end{corollary}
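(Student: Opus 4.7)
\textbf{Proof proposal for Corollary \ref{cor:correspondence_disambiguity_rule}.}
The plan is to deduce the corollary as a direct specialization of Lemma \ref{lemma:cost_matrix_geometry}, by explicitly writing out the inner products $\ip{\v{C}}{\v{V}}$ over all edge vectors $\v{V} \in \V_2(\v{P}^\star)$ with $\v{P}^\star = \v{I}$. First I would invoke Birkhoff--von Neumann, as in the lemma's proof, to assert that the optimum of $L(\v{C})$ is attained at a vertex of $\B_c$, i.e., at a permutation matrix. It therefore suffices to compare the objective $\ip{\v{I}}{\v{C}} = \sum_i C_{ii}$ against the objective at other vertices; the lemma tells us that it is enough to check vertices reached along the adjacent edges of the Birkhoff polytope at $\v{I}$, characterized by $\V_2(\v{I})$.

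Next, I would unpack $\V_2(\v{I})$. As established in the proof of Lemma \ref{lemma:cost_matrix_geometry}, each $\v{V} \in \V_2(\v{I})$ corresponds to a transposition of two columns of the identity, so it has exactly four nonzero entries: $V_{ii} = V_{jj} = -1$ and $V_{ij} = V_{ji} = +1$ for some pair $i \neq j$. A direct calculation gives
\begin{equation*}
    \ip{\v{C}}{\v{V}} \;=\; -C_{ii} - C_{jj} + C_{ij} + C_{ji}
    \;=\; C_{ij} + C_{ji} - C_{ii} - C_{jj}.
\end{equation*}
Thus the hypothesis $C_{ij} + C_{ji} - C_{ii} - C_{jj} > 0$ for all $i \neq j$ is exactly the statement $\ip{\v{C}}{\v{V}} > 0$ for every $\v{V} \in \V_2(\v{P}^\star)$, which is the hypothesis of Lemma \ref{lemma:cost_matrix_geometry}. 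Applying the lemma then yields $\v{P}^\star = \v{I} = L(\v{C})$, as desired.

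The main subtlety I expect to encounter is the step from local optimality (no improving transposition-edge) to global optimality over the whole Birkhoff polytope. The lemma's proof parameterized the two-dimensional face spanned by $\v{I}$ and a single adjacent vertex and showed that the strict inequality forces the minimizer onto $\v{I}$ in that face; the corollary extends this to all $S(S-1)/2$ such faces simultaneously, and since every vertex of $\B_c$ in the fundamental theorem of linear programming can only improve the objective via an adjacent edge, checking the $\V_2(\v{I})$ directions suffices. I would state this LP-optimality argument explicitly (rather than rely on the reader inferring it), since that is the one place where the passage from the pairwise inequalities to the full LP conclusion is not purely computational.
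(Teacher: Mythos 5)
Your argument follows the paper's own proof essentially verbatim in its computational core: each $\v{V}\in\V_2(\v{I})$ is supported on the four entries $(i,i),(j,j),(i,j),(j,i)$ with values $-1,-1,+1,+1$, so $\ip{\v{C}}{\v{V}}=C_{ij}+C_{ji}-C_{ii}-C_{jj}$, and hypothesis \eqref{eq:correspondence_disambiguity} is exactly the condition $\ip{\v{C}}{\v{V}}>0$ for all $\v{V}\in\V_2(\v{I})$ demanded by Lemma \ref{lemma:cost_matrix_geometry}. Where you go beyond the paper is in trying to justify the passage from ``no improving transposition direction'' to global optimality over $\B_c$ --- and that is precisely where the argument breaks. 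The vertices of $\B_c$ adjacent to $\v{I}$ are \emph{not} only the transpositions: two permutation matrices $\v{P}_\sigma,\v{P}_\tau$ are joined by an edge of the Birkhoff polytope exactly when $\sigma\tau^{-1}$ is a single cycle, so $\v{I}$ is also adjacent to every $k$-cycle for $3\le k\le c$, and a $k$-cycle sits at Frobenius distance $\sqrt{2k}>2$ from $\v{I}$, hence outside $\P_2(\v{I})$. Your sentence ``every vertex of $\B_c$ \dots can only improve the objective via an adjacent edge, checking the $\V_2(\v{I})$ directions suffices'' is therefore a non sequitur: the first clause is true, but $\V_2(\v{I})$ does not exhaust the adjacent edges.

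The gap is not cosmetic. Take $c=3$ and
\begin{equation*}
\v{C}=\begin{bmatrix}1&0&3\\3&1&0\\0&3&1\end{bmatrix},
\end{equation*}
which satisfies $C_{ij}+C_{ji}-C_{ii}-C_{jj}=1>0$ for every $i\neq j$, yet the $3$-cycle $1\mapsto2\mapsto3\mapsto1$ has cost $C_{12}+C_{23}+C_{31}=0<3=\ip{\v{I}}{\v{C}}$, so $L(\v{C})\neq\v{I}$. Summing the pairwise inequalities around a cycle only controls the \emph{average} of the costs of $\sigma$ and $\sigma^{-1}$, not each one separately, and the asymmetry of $\v{C}$ --- which is the generic situation here, since $C_{ij}$ compares cluster $i$ of $\v{X}$ with cluster $j$ of $\v{Y}$ --- is exactly what the counterexample exploits. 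To be fair, the paper's own proofs of Lemma \ref{lemma:cost_matrix_geometry} and of this corollary also test only the transposition directions, so you have faithfully reproduced the published argument; but the ``subtlety'' you flagged is a genuine hole, and closing it requires the stronger hypothesis $\sum_i C_{i\sigma(i)}>\sum_i C_{ii}$ for every nonidentity permutation $\sigma$ (equivalently, strict positivity of the cyclic sums $\sum_m (C_{i_m i_{m+1}}-C_{i_m i_m})$ over all cycles), which is implied by \eqref{eq:correspondence_disambiguity} only under an additional assumption such as symmetry of $\v{C}$.
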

\begin{proof}
Analyzing \eqref{eq:Birkhoff_cost_condition}, we observe that any $\v{V} \in \V_\delta (\v{P}^\star)$ for $\delta = 2$ has only four symmetric non-negative entries, which we condense $\v{V}$ and $\v{C}$ into $\reals^{2 \times 2}$ matrices at these four support locations respectively as
$\v{\hat{V}} = \begin{bmatrix}-1&+1\\+1&-1\end{bmatrix}$ and $\v{\hat{C}} = \begin{bmatrix}C_{ii}&C_{ji}\\C_{ij}&C_{jj}\end{bmatrix}$.
It thus follows that an explicit computation produces:
\begin{equation*}
\ip{\v{P}}{\v{C}} \geq 0
\Rightarrow 
\ip{\v{\hat{P}}}{\v{\hat{C}}} = (C_{ij} + C_{ji}) - (C_{ii} + C_{jj}) \geq 0 .    
\end{equation*}
Since the set $\V_\delta (\v{P}^\star)$ spans all permutations between $i,j \in \llbracket c \rrbracket, i \neq j$, we conclude with \eqref{eq:correspondence_disambiguity}.
\end{proof}

In the variational setting, $C_{ij}(\v{R},\v{Q}_{ij})$'s are themselves linearly coupled with $\v{R}$ and $\v{Q}_{ij}$.
The following proposition introduces a trivial criterion on $C_{ij}(\v{R},\v{Q}_{ij})$'s to guarantee that $\v{P}$ remains unperturbed from $\v{P}^\star$.
For pedagogical reasons, we shall assume that $n \to \infty$ for this proposition but subsequently relax this.

\begin{proposition} [Rotationally invariant disambiguity criterion]
\label{cor:local_disambiguity_criterion}
Problem \eqref{eq:original_problem_short} yields the solution $\v{P}^\star$ if, $\forall i,j : i \neq j$, the following criterion is satisfied:
\begin{equation} \label{eq:primary_disambiguity}
      \min_{\v{R},\v{Q}_{ij}} C_{ij}(\v{R},\v{Q}_{ij}) 
    + \min_{\v{R},\v{Q}_{ji}} C_{ji}(\v{R},\v{Q}_{ji}) 
    - \min_{\v{R},\v{Q}_{ii}} C_{ii}(\v{R},\v{Q}_{ii}) 
    - \min_{\v{R},\v{Q}_{jj}} C_{jj}(\v{R},\v{Q}_{jj})  > 0 .
\end{equation}
\end{proposition}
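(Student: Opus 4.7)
The plan is to reduce the joint variational problem to the non-variational LP over the Birkhoff polytope already handled by Corollary~\ref{cor:correspondence_disambiguity_rule}. Define the ``pair-optimal'' cost matrix $\hat{\v{C}}^\star \in \reals^{S \times S}$ with entries
\[
    \hat{C}^\star_{ij} \;:=\; \min_{\v{R} \in \V_{D,D},\; \v{Q}_{ij} \in \U(n,n)} C_{ij}(\v{R}, \v{Q}_{ij}).
\]
The hypothesis~\eqref{eq:primary_disambiguity} is exactly the componentwise disambiguity condition of Corollary~\ref{cor:correspondence_disambiguity_rule} applied to $\hat{\v{C}}^\star$, so the linear program $\min_{\v{P} \in \B_S} \ip{\v{P}}{\hat{\v{C}}^\star}$ has unique minimizer $\v{P}^\star = \v{I}_S/S$ with value $\tfrac{1}{S}\sum_i \hat{C}^\star_{ii}$.

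Next, I would use this to lower bound the joint objective. For any feasible $(\v{P}, \v{R}, \{\v{Q}_{ij}\})$, dropping to the per-pair minima yields
\[
    \sum_{i,j} P_{ij}\, C_{ij}(\v{R}, \v{Q}_{ij})
    \;\ge\; \sum_{i,j} P_{ij}\, \hat{C}^\star_{ij}
    \;\ge\; \frac{1}{S}\sum_i \hat{C}^\star_{ii},
\]
where the first inequality is definitional and the second is the LP conclusion; moreover the second step is strict whenever $\v{P} \neq \v{P}^\star$.

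To close the argument I would construct a matching upper bound at $\v{P}^\star$. Invoking the $n \to \infty$ assumption stated just before the proposition, the ground-truth correspondence $\v{P}^\star = \v{I}_S/S$ means that all matched cluster pairs $(\v{X}_i, \v{Y}_i)$ share a common underlying unitary $\v{R}^\star$; in the population limit this single $\v{R}^\star$ (together with couplings $\{\v{Q}^\star_{ii}\}$) simultaneously attains $C_{ii}(\v{R}^\star, \v{Q}^\star_{ii}) = \hat{C}^\star_{ii}$ for every $i$, so there is no conflict between the individual per-pair alignments. Evaluating the joint objective at $(\v{P}^\star, \v{R}^\star, \{\v{Q}^\star_{ii}\})$ returns exactly $\tfrac{1}{S}\sum_i \hat{C}^\star_{ii}$, matching the lower bound. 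Combined with the strict suboptimality of every $\v{P} \neq \v{P}^\star$ along the lower-bound chain, this forces the joint optimum to satisfy $\v{P}^* = \v{P}^\star$.

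The main obstacle is precisely this closing step. For finite $n$ the individual per-pair minima are generically attained by \emph{different} rotations, so the inner problem at $\v{P}^\star$ incurs strictly more cost than $\tfrac{1}{S}\sum_i \hat{C}^\star_{ii}$ and the clean lower/upper bound match breaks. Relaxing the $n \to \infty$ hypothesis therefore requires quantifying the extra slack via empirical-Wasserstein sample complexity, which is exactly what the subsequent Theorem~\ref{thm:cluster_disambiguity_criterion} accomplishes by inflating the disambiguity margin with the concentration terms $B_{z,k}(\delta)$.
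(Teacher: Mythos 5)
Your proof is correct under the same population-limit assumption the paper makes, but it takes a genuinely different route. You run a sandwich argument: relax each $C_{ij}(\v{R},\v{Q}_{ij})$ to its per-pair minimum $\hat{C}^\star_{ij}$, invoke Corollary~\ref{cor:correspondence_disambiguity_rule} to lower-bound the joint objective by $\frac{1}{S}\sum_i \hat{C}^\star_{ii}$ (strictly, for $\v{P}\neq\v{P}^\star$), and then match this bound at $\v{P}^\star$ by exhibiting one common rotation $\v{R}^\star$ that attains every diagonal minimum simultaneously. The paper instead argues by contradiction from the metric axioms of $\W_2$: in the limit $n\to\infty$, matched clusters are exactly similar, so $C_{ii}(\v{R}')=C_{jj}(\v{R}')=0$ under a common $\v{R}'$, while mismatched clusters have strictly positive cost, making the negation of \eqref{eq:primary_disambiguity} impossible. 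Your version buys a cleaner logical direction --- it actually derives the optimality of $\v{P}^\star$ \emph{from} the criterion, whereas the paper's contradiction mostly verifies that the criterion is automatically satisfied in the population limit --- and it pinpoints exactly where finite $n$ breaks the argument (the per-pair minimizing rotations need not coincide), which is precisely the slack that Theorem~\ref{thm:cluster_disambiguity_criterion} later absorbs into the concentration terms $B_{z,k}(\delta)$. Two caveats you share with the paper: the existence of a single $\v{R}^\star$ achieving all $\hat{C}^\star_{ii}$ at once is an implicit ground-truth-unitary modeling assumption rather than a consequence of $\v{P}^\star=\v{I}_S/S$ alone, and the strictness of your LP step inherits the fact that Lemma~\ref{lemma:cost_matrix_geometry} only checks transposition-adjacent vertices of the Birkhoff polytope, whereas adjacent vertices can also correspond to longer cycles.
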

\begin{proof}
Consider a set of $c$ clusters where clusters $i,j$ satisfy
\begin{equation} \label{eq:pri_disambig_contraditory_eq}
      C_{ij}(\v{R}'_{ij},\v{Q}'_{ij}) 
    + C_{ji}(\v{R}'_{ji},\v{Q}'_{ji}) 
    - C_{ii}(\v{R}',\v{Q}'_{ii}) 
    - C_{jj}(\v{R}',\v{Q}'_{jj}) \leq 0 ,
\end{equation}
where
\begin{align*}
    (\v{R}'_{ij},\v{Q}'_{ij})
    &:= \argmin{\v{R} \in S(d,d), \v{Q}_{ij} \in \B_n} C_{ij}(\v{R}) , \\
    (\v{R}'_{ji},\v{Q}'_{ji}) 
    &:= \argmin{\v{R} \in S(d,d), \v{Q}_{ji} \in \B_n} C_{ji}(\v{R}) , \\
    (\v{R}',\v{Q}'_{ii},\v{Q}'_{jj})
    &:= \argmin{\v{R} \in S(d,d), \v{Q}_{ii},\v{Q}_{jj} \in \B_n} C_{ii}(\v{R},\v{Q}_{ii}) + C_{jj}(\v{R},\v{Q}_{jj}) .
\end{align*}
Since the $2$-Wasserstein is a valid metric, its distance between any two clusters must satisfy $C_{ij}(\v{R}) \geq 0$ with equality holding if and only if the clusters are exactly similar.
If $\v{P}^\star = \v{I}$, similar clusters are denoted with matching indices, and it must follow that $C_{ii}(\v{R}') + C_{jj}(\v{R}') = 0$.
This implies that \eqref{eq:pri_disambig_contraditory_eq} must be false since $C_{ij}(\v{R}'_{ij}), C_{ji}(\v{R}'_{ji}) > 0$ for mismatched clusters.
Due to this contradiction, the disambiguity criterion \eqref{eq:primary_disambiguity} must hold for all cluster pairs $i,j : i \neq j$.
\end{proof}

This proposition provides a disambiguity criterion, requiring that matched clusters (i.e., $C_{ii},C_{jj}$) should be more similar than mismatched clusters (i.e., $C_{ij},C_{ji}$) up to some disambiguity \emph{threshold} (in the case of $n \to \infty$, the threshold is $0$).
To extend this proposition to the finite-sample regime, we require a higher disambiguity-threshold to offset uncertainty due to finite samples.

\subsection{Part 2: Disambiguity criterion in the finite-sample regime}

We utilize a recent $p$-Wasserstein concentration bound by Weed and Bach \cite{weed2017sharp} that describes finite sample behavior on the Wasserstein distance for data embedded in high-dimensional space, but whose clusters are themselves approximately low-dimensional.
We will proceed our analysis with the language of probability measures $\mu$ to make our analysis consistent with \cite{weed2017sharp}. We thus define the equivalent measure analogs as follows.

\begin{definition} 
Let clusters $\v{X}_i \in \reals^{D \times n_{x,i}}$ and $\v{Y}_j \in \reals^{D \times n_{y,j}}$ be respectively denoted by empirical measures as
\begin{equation*}
    \hat{\mu}_{x,i} := \frac{1}{n_{x,i}} \sum_{k = 1}^{n_{x,i}} \delta_{\v{X}_i(k)} ,
    \qquad
    \hat{\mu}_{y,i} := \frac{1}{n_{y,j}} \sum_{k = 1}^{n_{y,j}} \delta_{\v{Y}_j(k)} ,
\end{equation*}
where $\delta_{\v{X}_i(k)}$ refers to a discrete point located at $\v{X}_i(k)$.
At the limit, we denote the measures as
\begin{equation*}
    \mu_{x,i} := \lim_{n_{x,i}\to\infty} \frac{1}{n_{x,i}} \sum_{k = 1}^{n_{x,i}} \delta_{\v{X}_i(k)} , 
    \qquad
    \mu_{y,j} := \lim_{n_{y,j}\to\infty} \frac{1}{n_{y,j}} \sum_{k = 1}^{n_{y,j}} \delta_{\v{Y}_j(k)} .
\end{equation*}
\end{definition}

\begin{definition}
\label{def:empirical_C_ij_subject_to_rotation}
Denote a linear transformation $\v{R}$ applied on the measure as
\begin{equation*}
    \v{R} \circ \hat{\mu}_{x,i}
    := \frac{1}{n_{x,i}} \sum_{k = 1}^{n_{x,i}} \delta_{\v{R}\v{X}_i(k)} ,
    \qquad
    \v{R} \circ \mu_{x,i}
    := \lim_{n_{x,i}\to\infty} \frac{1}{n_{x,i}} \sum_{k = 1}^{n_{x,i}} \delta_{\v{R}\v{X}_i(k)} .
\end{equation*}
The transformed inter-cluster distance between clusters may thus be denoted as
\begin{equation*}
    \hat{C}_{ij} (\v{R}) := \W_2^2( \v{R} \circ \hat{\mu}_{x,i} , \hat{\mu}_{y,i} ) ,
    \qquad
    \tilde{C}_{ij} (\v{R}) := \W_2^2( \v{R} \circ \mu_{x,i} , \mu_{y,i} ) .
\end{equation*}
\end{definition}

Now, we may proceed to state results from \cite{weed2017sharp}.
The following result pertains to the sample complexity of measures $\mu$ in $\reals^D$ that are supported on an approximately low-dimensional set in $\reals^d$, where $d \ll D$.
First we require some definitions.

\begin{definition}
Given a set $S \subseteq \reals^d$, let $\N_\varepsilon(S)$ denote the $\varepsilon$-covering number of set $S$, which is defined as the minimum number $m$ of closed balls $B_1,\dots,B_m$ of diameter $\varepsilon$ such that $S \subseteq \bigcup_{1 \leq i \leq m} B_i$.
\end{definition}

\begin{definition}
For any set $S \subseteq \reals^d$, the $\varepsilon$-fattening of $S$ is $S_\varepsilon := \{ y : D(y,S) \leq \varepsilon \}$.
\end{definition}

\begin{proposition} [Weed and Bach \cite{weed2017sharp}, Proposition 16]
\label{prop:WB17_prop16}
Let $S$ be a set that satisfies $\N_{\varepsilon'}(S) \leq (3 \varepsilon')^{-d}$ for all $\varepsilon' \leq 1/27$ and for some $d > 2p$.
Suppose there exists a positive constant $\sigma$ such that $\mu$ satisfies $\mu(S_\varepsilon) \geq 1 - e^{-\varepsilon^2/2\sigma^2}$ for all $\varepsilon > 0$.
If $p \log \frac{1}{\sigma} \geq 1/18$, then for all $n \leq (18p\sigma^2 \log \frac{1}{\sigma})^{-d/2}$,
\begin{equation*}
    \E[\W_p^p (\mu,\hat{\mu}_n)] \leq c n^{-p/d} ,
\end{equation*}
where $c = 27^p (2 + \frac{1}{3^{d/2 - p} - 1})$.
\end{proposition}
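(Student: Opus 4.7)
The plan is to prove this sample-complexity bound via the standard multi-scale dyadic chaining technique, which is the core method for Wasserstein convergence on sets of low metric-entropy. The idea is to decompose $\W_p^p(\mu, \hat\mu_n)$ along a tower of nested quantizations of $S$ at geometrically decreasing scales, balance quantization error against empirical fluctuation at a single optimized truncation depth, and separately control the excursion of $\mu$ outside $S$ via the sub-Gaussian tail.

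First I would build a dyadic family of nets. Using the hypothesis $\N_{\varepsilon'}(S) \leq (3\varepsilon')^{-d}$, for each integer $k \geq 1$ pick a minimal $3^{-k}$-cover $S_k$ of $S$ with $|S_k| \leq 3^{kd}$, and let $q_k : S \to S_k$ be the associated nearest-representative quantization, chosen consistently across scales so that $q_{k-1} = q_{k-1} \circ q_k$ (a hierarchical refinement). Push $\mu$ and $\hat\mu_n$ forward to obtain $\mu^{(k)} := (q_k)_\# \mu$ and $\hat\mu_n^{(k)} := (q_k)_\# \hat\mu_n$. Because each quantization displaces mass by at most $3^{-k}$, the $p$-Wasserstein approximation costs satisfy $\W_p^p(\mu, \mu^{(k)}) \leq 3^{-kp}$ and $\W_p^p(\hat\mu_n, \hat\mu_n^{(k)}) \leq 3^{-kp}$. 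The triangle inequality for $\W_p$, combined with the elementary inequality $(a+b+c)^p \leq 3^{p-1}(a^p + b^p + c^p)$, reduces the proof to controlling $\W_p^p(\mu^{(K)}, \hat\mu_n^{(K)})$ for a truncation level $K$ to be chosen.

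Next I would attack this middle term by exploiting the hierarchical tree structure of the nets. The telescoping identity
\begin{equation*}
\mu^{(K)} - \hat\mu_n^{(K)} \;=\; \sum_{k=1}^{K} \bigl[(\mu^{(k)} - \hat\mu_n^{(k)}) - (\mu^{(k-1)} - \hat\mu_n^{(k-1)})\bigr]
\end{equation*}
yields an explicit coupling for $\W_p^p$ that moves mass only between parent-child cells at each refinement stage, so its stage-$k$ contribution is at most $3^{-(k-1)}$ per unit of mass. Summing per-level transport costs and taking expectations,
\begin{equation*}
\E\, \W_p^p(\mu^{(K)}, \hat\mu_n^{(K)}) \;\leq\; \sum_{k=1}^{K} 3^{p(1-k)} \, \E\, \norm{\mu^{(k)} - \hat\mu_n^{(k)}}_{TV} \;\leq\; \sum_{k=1}^{K} 3^{p(1-k)} \sqrt{\lvert S_k\rvert/n} ,
\end{equation*}
where the second inequality is the standard $\sqrt{|\mathrm{supp}|/n}$ variance bound for the total-variation distance between an empirical distribution and its mean. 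Substituting $|S_k|\leq 3^{kd}$ produces a geometric series in $3^{k(d/2-p)}$ that converges because $d > 2p$.

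The final step is to balance the two error sources. Choosing $K$ so that $3^K \asymp n^{1/d}$ equates the quantization tail $3^{-Kp}$ with the dominant (last) term of the chaining sum, both contributing at rate $n^{-p/d}$. Careful tracking of the constants --- the $3^{p-1}$ factor from splitting the triangle inequality, the $3^p$ rescaling in the telescoping coupling, and the closed-form geometric sum $\sum_{k\geq 1} 3^{-k(d/2 - p)} = 1/(3^{d/2-p} - 1)$ --- reproduces exactly $c = 27^p\bigl(2 + 1/(3^{d/2-p}-1)\bigr)$, where $27^p$ absorbs the three cubic-scale factors accumulated through the decomposition. The remaining ingredient is the excursion of $\mu$ outside $S$: I would apply the argument above to the truncation $\mu|_{S_\varepsilon}$ and use $\mu(S_\varepsilon^c) \leq e^{-\varepsilon^2/2\sigma^2}$ to absorb the stray-mass transport cost; this is precisely the step that forces $n \leq (18 p \sigma^2 \log(1/\sigma))^{-d/2}$, ensuring the sub-Gaussian contribution does not exceed the leading $n^{-p/d}$ term. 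The main obstacle is keeping these constants tight enough to match $c$ exactly and establishing the tree-telescoping coupling bound rather than a looser product-measure coupling.
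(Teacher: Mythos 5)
This proposition is not proved in the paper at all: it is imported verbatim from Weed and Bach \cite{weed2017sharp} (their Proposition~16) and used as a black box, so there is no in-paper proof to compare your argument against. That said, your sketch follows the same strategy as the proof in the cited source --- a hierarchy of dyadic nets, a tree-structured telescoping coupling giving a per-level transport cost of order $3^{-kp}$ times a total-variation fluctuation bounded by $\sqrt{|S_k|/n}$, a geometric series converging because $d>2p$, and a truncation level $3^K\asymp n^{1/d}$ --- so the overall architecture is right. The parts you leave at the level of assertion are exactly the parts that distinguish this proposition from the simpler bounded-support version: (i) the chaining must be run on the fattening $S_\varepsilon$, whose covering numbers are controlled by those of $S$ only after a scale inflation (one of the sources of the $27^p=3^{3p}$ factor you attribute generically to "three cubic-scale factors"); (ii) the stray mass of $\hat\mu_n$ outside $S_\varepsilon$ contributes a transport cost involving the $p$-th moment of the excursion distance, which is where the sub-Gaussian tail $\mu(S_\varepsilon^c)\le e^{-\varepsilon^2/2\sigma^2}$ and the hypotheses $p\log\frac{1}{\sigma}\ge 1/18$ and $n\le(18p\sigma^2\log\frac{1}{\sigma})^{-d/2}$ are actually consumed --- saying this step "forces" the condition on $n$ is a placeholder, not a derivation; and (iii) a small indexing slip: with $\varepsilon'=3^{-k}$ the hypothesis gives $|S_k|\le 3^{(k-1)d}$, not $3^{kd}$, which matters if you intend to recover the constant $c$ exactly. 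None of these is a wrong turn, but as written the proposal is a faithful outline of the known proof rather than a complete one.
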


This proposition states that the degree that $\mu$ is concentrated (as parameterized by $\sigma$) around set $S$ (approximately supported in low-dimensions) affects how ``long'' (in terms of $n$) we can enjoy the fast convergence rate of $n^{-p/d}$.
We will leverage this result to obtain the following theorem on cluster correspondence disambiguity with respect to sample complexity.

\begin{corollary}
\label{cor:low_dimensional_sample_complexity_of_C_ij}
Let sets $S_{x,i}$ and $S_{y,j}$ satisfy the conditions for $S$ in proposition \ref{prop:WB17_prop16} for some $\sigma_{x,i},\sigma_{y,j} > 0$ and $d_{x,i},d_{y,j} > 4$.
If $\log \frac{1}{\sigma_{x,i}} \geq \frac{1}{36}$ and $\log \frac{1}{\sigma_{y,j}} \geq \frac{1}{36}$, then for all $n_{x,i} \leq (36\sigma_{x,i}^2 \log \frac{1}{\sigma_{x,i}})^{-d_{x,i}/2}$ and $n_{y,j} \leq (36\sigma_{y,j}^2 \log \frac{1}{\sigma_{y,j}})^{-d_{y,j}/2}$,
\begin{equation*}
    \E [| \tilde{C}_{ij}(\v{R}) - \hat{C}_{ij}(\v{R}) |] \leq 
    c_{x,i} n_{x,i}^{-2/d_{x,i}} + c_{y,j} n_{y,j}^{-2/d_{y,j}} ,
\end{equation*}
where
\begin{equation*}
    c_{z,k} = 729 (2 + \frac{1}{3^{d_{z,k}/2 - 2} - 1}) .
\end{equation*}
\end{corollary}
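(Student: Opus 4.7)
The bound's constants $c_{z,k} = 729(2 + 1/(3^{d_{z,k}/2-2}-1))$ coincide exactly with the constant in Proposition \ref{prop:WB17_prop16} instantiated at $p=2$, so the inequality amounts to showing
\begin{equation*}
\E\big[\,|\tilde{C}_{ij}(\v{R}) - \hat{C}_{ij}(\v{R})|\,\big]
\;\leq\; \E\big[\W_2^2(\mu_{x,i},\hat{\mu}_{x,i})\big] + \E\big[\W_2^2(\mu_{y,j},\hat{\mu}_{y,j})\big],
\end{equation*}
and then applying Proposition \ref{prop:WB17_prop16} twice, once with parameters $(d_{x,i},\sigma_{x,i},n_{x,i})$ and once with $(d_{y,j},\sigma_{y,j},n_{y,j})$. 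The standing hypotheses $d_{z,k} > 4$, $\log(1/\sigma_{z,k}) \geq 1/36$, and the upper bounds on $n_{z,k}$ are exactly what is needed to legitimately invoke Proposition \ref{prop:WB17_prop16} at $p=2$ in each of these two instances, so once the central estimate above is in hand, the conclusion is immediate.

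My plan for the central estimate is to combine two standard facts. First, $\W_2$ is a genuine metric on measures with finite second moment. Second, any $\v{R}\in\V_{D,D}$ is an $L^2$-isometry on $\reals^D$, so $\W_2(\v{R}\circ\mu,\v{R}\circ\nu)=\W_2(\mu,\nu)$ for all $\mu,\nu$. Applying the triangle inequality from the first fact to the chain $\v{R}\circ\mu_{x,i}\to\v{R}\circ\hat{\mu}_{x,i}\to\hat{\mu}_{y,j}\to\mu_{y,j}$ (and its reverse), together with the unitary invariance from the second fact to discard the $\v{R}$ from $\W_2(\v{R}\circ\mu_{x,i},\v{R}\circ\hat{\mu}_{x,i})$, yields
\begin{equation*}
\big|\W_2(\v{R}\circ\mu_{x,i},\mu_{y,j}) - \W_2(\v{R}\circ\hat{\mu}_{x,i},\hat{\mu}_{y,j})\big|
\;\leq\; \W_2(\mu_{x,i},\hat{\mu}_{x,i}) + \W_2(\hat{\mu}_{y,j},\mu_{y,j}).
\end{equation*}

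To lift this from $|\W_2 - \W_2|$ to $|\W_2^2 - \W_2^2| = |\tilde{C}_{ij} - \hat{C}_{ij}|$, I would use the factorisation $|a^2 - b^2| = (a+b)\,|a-b|$, control the sum factor $(a+b)$ via a second invocation of the triangle inequality, expand the product, and apply $2uv \leq u^2 + v^2$ to absorb the resulting cross term of the form $\W_2(\mu_{x,i},\hat{\mu}_{x,i})\cdot\W_2(\hat{\mu}_{y,j},\mu_{y,j})$ back into the diagonal quantities $\W_2^2(\mu_{x,i},\hat{\mu}_{x,i})$ and $\W_2^2(\hat{\mu}_{y,j},\mu_{y,j})$. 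Taking expectations then produces the central estimate, after which Proposition \ref{prop:WB17_prop16} applied term by term at $p=2$ concludes the corollary.

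The main obstacle is precisely this squared-versus-unsquared conversion: the triangle inequality naturally controls $|\W_2 - \W_2|$, whereas the claim is about $\W_2^2$, and a careless squaring step produces cross terms that, if absorbed loosely, would inflate the constants in front of the Weed–Bach rates by factors of two or three and destroy the exact match with $c_{z,k}$. A cleaner alternative that sidesteps this issue is a gluing of three optimal couplings, one for each Wasserstein distance in the triangle chain, assembled along their shared empirical marginals into an admissible coupling of $\v{R}\circ\mu_{x,i}$ and $\mu_{y,j}$; bounding the resulting squared integrand directly gives the desired form with the AM-GM absorption applied only once, in the cleanest possible place.
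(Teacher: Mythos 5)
Your high-level reduction is the same as the paper's: bound $\E[|\tilde{C}_{ij}(\v{R}) - \hat{C}_{ij}(\v{R})|]$ by $\E[\W_2^2(\mu_{x,i},\hat{\mu}_{x,i})] + \E[\W_2^2(\mu_{y,j},\hat{\mu}_{y,j})]$, then invoke Proposition \ref{prop:WB17_prop16} twice at $p=2$ (where $27^2=729$ indeed reproduces $c_{z,k}$). The difference is how the central estimate is obtained. The paper's proof is a two-line chain asserting $\W_2^2(\hat{\mu}_i,\hat{\mu}_j) \leq \W_2^2(\hat{\mu}_i,\mu_i) + \W_2^2(\mu_i,\mu_j) + \W_2^2(\mu_j,\hat{\mu}_j)$, i.e.\ it applies the triangle inequality directly to the \emph{squared} Wasserstein distance; you instead apply the genuine triangle inequality to $\W_2$ (together with unitary invariance, which is correct and is left implicit in the paper) and then attempt to square.

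The gap is in that squaring step, and it is more severe than your ``inflate the constants by factors of two or three'' remark suggests. Write $a = \W_2(\v{R}\circ\mu_{x,i},\mu_{y,j})$, $b = \W_2(\v{R}\circ\hat{\mu}_{x,i},\hat{\mu}_{y,j})$, and $e = \W_2(\mu_{x,i},\hat{\mu}_{x,i}) + \W_2(\mu_{y,j},\hat{\mu}_{y,j})$. Then $|a^2-b^2| = (a+b)|a-b| \leq (2b+e)\,e = 2be + e^2$. The $e^2$ term is absorbable (up to a factor of $2$, which already breaks the exact constant match), but the cross term $2be$ contains $b$, which is the non-vanishing, data-scale inter-cluster distance, not an estimation error: by Cauchy--Schwarz its expectation is of order $\sqrt{\hat{C}_{ij}}\,\bigl(n_{x,i}^{-1/d_{x,i}} + n_{y,j}^{-1/d_{y,j}}\bigr)$, which decays at \emph{half} the claimed rate and carries a geometry-dependent prefactor. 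Applying $2uv \leq u^2+v^2$ to $2be$ leaves behind $b^2 = \hat{C}_{ij}$ itself, which cannot be absorbed into the Weed--Bach rates. Your proposed gluing alternative hits the identical obstruction: expanding the squared integrand of the glued coupling produces the cross term $2\,\E\langle X-X', X'-Y\rangle$, which is controlled by the same product of a small and a large quantity. So neither of your routes delivers the bound as stated. The silver lining is that your more careful bookkeeping exposes that the paper's own step --- the ``triangle inequality'' for $\W_2^2$ --- is not a valid inequality either; an honest version of the corollary needs either an additional term of order $\sqrt{\tilde{C}_{ij}}\,\bigl(n_{x,i}^{-1/d_{x,i}} + n_{y,j}^{-1/d_{y,j}}\bigr)$ or an extra hypothesis.
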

\begin{proof}
Denote $\mu_i,\mu_j$ as measures and $\hat{\mu}_i,\hat{\mu}_j$ as their empirical estimates.
By the triangle inequality,
\begin{align*}
 \E[\W_2^2(\hat{\mu}_i,\hat{\mu}_j)]
&\leq \E[\W_2^2(\hat{\mu}_i,\mu_i) + \W_2^2(\mu_j,\hat{\mu}_j)] \\
&\leq \E[\W_2^2(\hat{\mu}_i,\mu_i) + \W_2^2(\mu_i,\mu_j) + \W_2^2(\hat{\mu}_j,\mu_j)]\\
\Rightarrow
 \E[|\W_2^2(\mu_i,\mu_j) - \W_2^2(\hat{\mu}_i,\hat{\mu}_j)|]
&\leq \E[\W_2^2(\hat{\mu}_i,\mu_i)] + \E[\W_2^2(\hat{\mu}_j,\mu_j)] \\
\Rightarrow
 \E [| \tilde{C}_{ij}(\v{R}) - \hat{C}_{ij}(\v{R}) |]
&\leq c_{x,i} n_{x_i}^{-2/d_{x,i}} + c_{y,j} n_{y,j}^{-2/d_{y,j}} , 
\end{align*}
where the last line is a direct application of definition \ref{def:empirical_C_ij_subject_to_rotation} and proposition \ref{prop:WB17_prop16}.
\end{proof}

\begin{lemma}
\label{lemma:disambiguity_criterion_in_expectations}
Let $S_{x,i},S_{y,i},S_{x,j},S_{y,j}$ be sets that satisfy the conditions for $S$ in proposition \ref{prop:WB17_prop16} for some $\sigma_{x,i},\sigma_{y,i},\sigma_{x,j},\sigma_{y,j} > 0$ and $d_{x,i},d_{y,i},d_{x,j},d_{y,j} > 4$.
If $\log \frac{1}{\sigma_{x,i}} , \log \frac{1}{\sigma_{y,i}} , \log \frac{1}{\sigma_{x,j}} , \log \frac{1}{\sigma_{y,j}} \geq \frac{1}{36}$, then for all $n_{x,i} \leq (36\sigma_{x,i}^2 \log \frac{1}{\sigma_{x,i}})^{-d_{x,i}/2}$,
$n_{y,i} \leq (36\sigma_{y,i}^2 \log \frac{1}{\sigma_{y,i}})^{-d_{y,i}/2}$,
$n_{x,j} \leq (36\sigma_{x,j}^2 \log \frac{1}{\sigma_{x,j}})^{-d_{x,j}/2}$, and
$n_{y,j} \leq (36\sigma_{y,j}^2 \log \frac{1}{\sigma_{y,j}})^{-d_{y,j}/2}$,
cluster correspondences in problem \eqref{eq:original_problem_short} may be disambiguated to achieve $\v{P}^\star = \v{I}$ when the following criterion is fulfilled for all $i,j \in \llbracket c \rrbracket : i \neq j$:
\begin{align*}
    \E[   \min_{\v{R}} \hat{C}_{ij}(\v{R}) 
        + \min_{\v{R}} \hat{C}_{ji}(\v{R})
        - \min_{\v{R}} \hat{C}_{ii}(\v{R})
        - \min_{\v{R}} \hat{C}_{jj}(\v{R}) ] 
    > 2 (B_{x,i} + B_{y,i} + B_{x,j} + B_{y,j} ) .
\end{align*}
where the constants are defined as
\begin{align*}
    B_{z,k} := c_{z,k} n_{z,k}^{-2/d_{z,k}} ,
    \quad
    c_{z,k} := 729 \Big( 2 + \frac{1}{3^{d_{z,k}/2 - 2} - 1} \Big) .
\end{align*}
\end{lemma}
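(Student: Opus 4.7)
The plan is to reduce the stated in-expectation disambiguity criterion to the population-level criterion of Proposition~\ref{cor:local_disambiguity_criterion}, by individually controlling, for each of the four relevant cluster pairs $(a,b) \in \{(i,i),(i,j),(j,i),(j,j)\}$, the gap between $\E[\min_{\v{R}} \hat{C}_{ab}(\v{R})]$ and $\min_{\v{R}} \tilde{C}_{ab}(\v{R})$, and then summing via a triangle inequality.

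First I would invoke Corollary~\ref{cor:low_dimensional_sample_complexity_of_C_ij} on each of the four pairs; under the stated hypotheses on $\sigma$, $d$, and $n$, this furnishes, for every fixed $\v{R}$, the pointwise expectation bound $\E[\,|\tilde{C}_{ab}(\v{R}) - \hat{C}_{ab}(\v{R})|\,] \leq B_{x,a} + B_{y,b}$. I would then upgrade this to a bound on the minima by a standard two-sided sandwich. For the upper direction, let $\v{R}^{\star}_{ab} \in \arg\min_{\v{R}} \tilde{C}_{ab}(\v{R})$; plugging this deterministic choice into $\hat{C}_{ab}$ and applying the pointwise bound yields $\E[\min_{\v{R}} \hat{C}_{ab}(\v{R})] \leq \E[\hat{C}_{ab}(\v{R}^{\star}_{ab})] \leq \min_{\v{R}}\tilde{C}_{ab}(\v{R}) + B_{x,a} + B_{y,b}$. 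For the lower direction, at the random empirical minimizer $\v{R}^{\diamond}_{ab}$ of $\hat{C}_{ab}$, the chain $\hat{C}_{ab}(\v{R}^{\diamond}_{ab}) \geq \tilde{C}_{ab}(\v{R}^{\diamond}_{ab}) - |\hat{C}_{ab}(\v{R}^{\diamond}_{ab}) - \tilde{C}_{ab}(\v{R}^{\diamond}_{ab})| \geq \min_{\v{R}} \tilde{C}_{ab}(\v{R}) - |\hat{C}_{ab}(\v{R}^{\diamond}_{ab}) - \tilde{C}_{ab}(\v{R}^{\diamond}_{ab})|$ and taking expectations gives $\E[\min_{\v{R}}\hat{C}_{ab}(\v{R})] \geq \min_{\v{R}}\tilde{C}_{ab}(\v{R}) - B_{x,a} - B_{y,b}$, so that $|\E[\min_{\v{R}}\hat{C}_{ab}(\v{R})] - \min_{\v{R}}\tilde{C}_{ab}(\v{R})| \leq B_{x,a} + B_{y,b}$.

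Finally, summing the four bounds with the triangle inequality (with signs matched to the lemma's hypothesis) produces an aggregate slack of $2(B_{x,i}+B_{y,i}+B_{x,j}+B_{y,j})$, so the stated hypothesis implies the strictly positive population inequality $\min_{\v{R}}\tilde{C}_{ij}(\v{R}) + \min_{\v{R}}\tilde{C}_{ji}(\v{R}) - \min_{\v{R}}\tilde{C}_{ii}(\v{R}) - \min_{\v{R}}\tilde{C}_{jj}(\v{R}) > 0$ for all $i \neq j$. Proposition~\ref{cor:local_disambiguity_criterion} then delivers $\v{P}^{\star} = \v{I}$. The main subtlety is the lower-direction min-exchange, which is valid only because the bound in Corollary~\ref{cor:low_dimensional_sample_complexity_of_C_ij} is uniform in $\v{R}$; this uniformity is inherited from the fact that $\v{R}$ is an isometry, so in the triangle-inequality decomposition $\W_2(\v{R}\circ\hat{\mu}_{x,a},\hat{\mu}_{y,b}) \leq \W_2(\v{R}\circ\hat{\mu}_{x,a},\v{R}\circ\mu_{x,a}) + \W_2(\v{R}\circ\mu_{x,a},\mu_{y,b}) + \W_2(\mu_{y,b},\hat{\mu}_{y,b})$ the $\v{R}$-dependent terms collapse via $\W_2(\v{R}\circ\hat{\mu},\v{R}\circ\mu) = \W_2(\hat{\mu},\mu)$ to $\v{R}$-free ones, after which Proposition~\ref{prop:WB17_prop16} applies without modification.
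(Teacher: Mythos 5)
Your proposal is correct and follows the same route as the paper, whose proof of this lemma is literally the one-line remark that it is a ``direct application'' of Proposition~\ref{cor:local_disambiguity_criterion} and Corollary~\ref{cor:low_dimensional_sample_complexity_of_C_ij}; you simply fill in the details of that combination, and your per-pair accounting correctly reproduces the factor of $2(B_{x,i}+B_{y,i}+B_{x,j}+B_{y,j})$. The one subtlety you flag --- that the lower-direction min/expectation exchange needs the deviation bound to be uniform in $\v{R}$, which holds because $\v{R}$ is an isometry and the $\W_2(\v{R}\circ\hat\mu,\v{R}\circ\mu)=\W_2(\hat\mu,\mu)$ terms are $\v{R}$-free --- is a genuine gap in the paper's terse argument that your write-up closes.
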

\begin{proof}
This follows a direct application of criterion \eqref{eq:primary_disambiguity} and corollary \ref{cor:low_dimensional_sample_complexity_of_C_ij}.
\end{proof}

\begin{proposition} [Weed and Bach \cite{weed2017sharp}, Proposition 20]
\label{prop:proWb17_prop20}
For all $n \geq 0$ and $0 \leq p < \infty$,
\begin{equation*}
    \Prob[ \W_p^p(\mu,\hat{\mu}_n) \geq \E \W_p^p(\mu,\hat{\mu}_n) + t ]
    \leq \exp(-2nt^2) .
\end{equation*}
\end{proposition}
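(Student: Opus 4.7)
The stated bound has the form $\exp(-2nt^2)$ characteristic of McDiarmid's bounded-differences inequality, so my plan is to apply McDiarmid directly to the function $f(X_1,\ldots,X_n) := \W_p^p(\mu,\hat{\mu}_n)$ viewed as a function of the $n$ i.i.d.\ samples drawn from $\mu$, under the implicit normalization (standard in the Weed--Bach setting) that the transport cost $\|x-y\|^p$ is bounded by $1$ on the relevant support. The strategy is the familiar two-step pattern: establish a bounded-differences property with constant $1/n$ per coordinate, then read off the exponent from McDiarmid.

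First, I would verify the bounded-differences property. Fix an index $i$ and let $\hat{\mu}_n'$ be the empirical measure obtained by replacing $X_i$ with some $X_i'$. Let $\pi$ be an optimal coupling between $\mu$ and $\hat{\mu}_n$, which one can decompose as $\pi = \sum_k \tfrac{1}{n}\,\mu^{(k)} \otimes \delta_{X_k}$ where $\mu^{(k)}$ is the probability measure giving the conditional mass transported to $X_k$. Redirecting the $i$-th fiber's mass from $X_i$ to $X_i'$ yields a feasible (not necessarily optimal) coupling $\pi'$ for $(\mu,\hat{\mu}_n')$. Since $\pi'$ is feasible, $\W_p^p(\mu,\hat{\mu}_n') - \W_p^p(\mu,\hat{\mu}_n) \leq \tfrac{1}{n}\int \bigl(\|x-X_i'\|^p - \|x-X_i\|^p\bigr)\,d\mu^{(i)}(x) \leq 1/n$ under the normalization, and by symmetry of the argument $|f(\ldots,X_i,\ldots) - f(\ldots,X_i',\ldots)| \leq 1/n$.

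Second, applying McDiarmid's inequality with constants $c_1 = \cdots = c_n = 1/n$ gives $\sum_i c_i^2 = 1/n$, so $\Prob[f - \E f \geq t] \leq \exp\bigl(-2t^2/(1/n)\bigr) = \exp(-2nt^2)$, matching the stated bound exactly.

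The main obstacle, and the reason the bound is implicitly stated under a normalization, is that without the uniform bound $\|x-y\|^p \leq 1$ on the support, the bounded-differences constant picks up a diameter factor and the clean dimension-free exponent is lost; one instead gets $\exp\bigl(-2nt^2 / \mathrm{diam}(\supp \mu)^{2p}\bigr)$ by the same argument. In the Weed--Bach context this normalization is natural: their Proposition~16 already posits a measure concentrating tightly around a unit-scale low-dimensional covering set, and any preliminary rescaling to enforce $\|x-y\|^p \leq 1$ simply changes constants elsewhere in the analysis. Hence my proposal is to state the normalization explicitly up front and then execute the two clean steps above.
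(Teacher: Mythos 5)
The paper does not prove this proposition itself---it is imported verbatim from Weed and Bach---and your McDiarmid argument (bounded differences of $1/n$ via redirecting one fiber of an optimal coupling, under the unit-diameter normalization that Weed and Bach assume throughout) is exactly how the cited source establishes it. Your reconstruction is correct, and you rightly flag the only real subtlety, namely that the dimension-free constant $2$ in the exponent depends on normalizing the cost $\|x-y\|^p$ to be at most $1$ on the support.
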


\begin{theorem}
\label{thm:cluster_disambiguity_criterion_full}
Define $\hat{C}^\star_{ij} := \min_{\v{R} \in S(D,D)} \hat{C}_{ij} (\v{R})$.
If the conditions in Lemma \ref{lemma:disambiguity_criterion_in_expectations} are satisfied then problem \eqref{eq:original_problem_short} yields the solution $\v{P}^\star = \v{I}$ with probability at least $1-\delta$ if, $\forall i,j : i \neq j$, the following criterion is satisfied:
\begin{align*}
      \hat{C}_{ij}^\star
    + \hat{C}_{ji}^\star
    - \hat{C}_{ii}^\star
    - \hat{C}_{jj}^\star
    \, > \,
    2 (   B_{x,i}(\delta) + B_{y,i}(\delta)
        + B_{x,j}(\delta) + B_{y,j}(\delta) )
\end{align*}
where
\begin{equation}
\label{eq:correspondence_disambiguity_criterion_full}
    B_{z,k}(\delta) :=   c_{z,k} n_{z,k}^{ -\frac{2}{d_{z,k}} }
                      + \sqrt{ \log (1/\delta) / 2 n_{z,k} }  ,
    \quad
    c_{z,k} := 729 \Big( 2 + \frac{1}{3^{d_{z,k}/2 - 2} - 1} \Big) ,
\end{equation}
where $d_{z,k}$ refers to the intrinsic dimension of the $k$-th cluster from the $z$-th dataset.
\end{theorem}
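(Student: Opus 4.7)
}
The plan is to upgrade the ``in-expectation'' disambiguity criterion of Lemma \ref{lemma:disambiguity_criterion_in_expectations} into a high-probability criterion stated directly in terms of the empirical optima $\hat{C}^\star_{ab} := \min_{\v{R}\in\V_{D,D}} \hat{C}_{ab}(\v{R})$. To do so, I would control the two-sided deviation of each $\hat{C}^\star_{ab}$ from its population counterpart $\tilde{C}^\star_{ab} := \min_{\v{R}\in\V_{D,D}} \tilde{C}_{ab}(\v{R})$, absorb the resulting slack into an inflated threshold on the right-hand side of the criterion, and apply a union bound over the four cluster-pair terms $(ij), (ji), (ii), (jj)$ that appear in the statement.

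First, I would establish a bias-plus-fluctuation decomposition for each empirical optimum. Corollary \ref{cor:low_dimensional_sample_complexity_of_C_ij} already furnishes the \emph{bias} contribution $c_{x,a} n_{x,a}^{-2/d_{x,a}} + c_{y,b} n_{y,b}^{-2/d_{y,b}}$ between $\E[\hat{C}_{ab}(\v{R})]$ and $\tilde{C}_{ab}(\v{R})$, leveraging the low-dimensional sample complexity result of Proposition \ref{prop:WB17_prop16}. For the \emph{fluctuation} contribution, I would invoke Proposition \ref{prop:proWb17_prop20}, which supplies sub-Gaussian concentration of $\W_2^2(\mu,\hat{\mu}_n)$ around its expectation with rate $\exp(-2nt^2)$. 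Setting the tail probability equal to $\delta/8$ for each of the four empirical measures $\hat{\mu}_{x,i}, \hat{\mu}_{y,i}, \hat{\mu}_{x,j}, \hat{\mu}_{y,j}$ and solving yields deviation $t = \sqrt{\log(1/\delta)/(2 n_{z,k})}$ (up to a constant absorbed into the $\log$ factor via the union bound below). Combining the bias and fluctuation terms gives the quantity $B_{z,k}(\delta)$ of \eqref{eq:correspondence_disambiguity_criterion_full}.

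Second, I would transport this pointwise-in-$\v{R}$ deviation bound to a bound on $\hat{C}^\star_{ab}$ via the standard inequality $|\min_{\v{R}} f(\v{R}) - \min_{\v{R}} g(\v{R})| \leq \sup_{\v{R}} |f(\v{R}) - g(\v{R})|$, applied with $f = \hat{C}_{ab}$ and $g = \tilde{C}_{ab}$. Because $\v{R} \in \V_{D,D}$ is an isometry, the Wasserstein triangle inequality (as used already in the proof of Corollary \ref{cor:low_dimensional_sample_complexity_of_C_ij}) reduces the supremum to a sum of two one-sample Wasserstein fluctuations $\W_2^2(\v{R}\circ\hat{\mu}_{x,a}, \v{R}\circ\mu_{x,a}) = \W_2^2(\hat{\mu}_{x,a}, \mu_{x,a})$ and $\W_2^2(\hat{\mu}_{y,b}, \mu_{y,b})$, each independent of $\v{R}$. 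Thus, with probability at least $1-\delta/4$ (after union-bounding over the two empirical measures and both deviation directions), $|\hat{C}^\star_{ab} - \tilde{C}^\star_{ab}| \leq B_{x,a}(\delta) + B_{y,b}(\delta)$. A further union bound over the four cluster-pair terms yields the joint event with probability at least $1-\delta$. Substituting these two-sided bounds into the population-level criterion of Proposition \ref{cor:local_disambiguity_criterion} (i.e., $\tilde{C}^\star_{ij} + \tilde{C}^\star_{ji} - \tilde{C}^\star_{ii} - \tilde{C}^\star_{jj} > 0$) produces exactly the stated finite-sample criterion, where the factor of $2$ arises because each of the four terms contributes one deviation bound in the appropriate direction.

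I expect the main obstacle to be rigorously transferring Proposition \ref{prop:proWb17_prop20} -- which is stated for a single empirical measure against its population -- to the two-empirical, minimized-over-$\v{R}$ quantity $\hat{C}^\star_{ab}$. This requires two technical moves: (i) decoupling the fluctuations of $\hat{\mu}_{x,a}$ and $\hat{\mu}_{y,b}$ by inserting the population measures via the Wasserstein triangle inequality and conditioning, and (ii) commuting the infimum over $\v{R}\in\V_{D,D}$ with the concentration inequality, which is legitimate because $\v{R}$ only rotates the support (so the bounded-differences/Lipschitz constants used in Proposition \ref{prop:proWb17_prop20} are uniform over $\v{R}$). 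Once these two technicalities are in place, the remaining steps are algebraic and proceed via the two successive union bounds sketched above.
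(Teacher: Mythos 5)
Your proposal follows essentially the same route as the paper's proof: apply Proposition \ref{prop:proWb17_prop20} with $t=\sqrt{\log(1/\delta)/2n}$ to convert the in-expectation bound of Proposition \ref{prop:WB17_prop16} into a high-probability bound, and then combine the resulting deviations over the four terms $\hat{C}_{ij}^\star,\hat{C}_{ji}^\star,\hat{C}_{ii}^\star,\hat{C}_{jj}^\star$ to inflate the threshold of Lemma \ref{lemma:disambiguity_criterion_in_expectations}. In fact you are more explicit than the paper on two points it glosses over --- the union bound over the several concentration events (and the attendant constant inside the logarithm) and the transfer from pointwise-in-$\v{R}$ deviations to the minimized quantities via $|\min f-\min g|\le\sup|f-g|$ together with rotation invariance --- so no substantive correction is needed.
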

\begin{proof}
For some measure $\mu$ and its empirical finite-sample estimate $\hat{\mu}_n$, proposition \ref{prop:proWb17_prop20} may be equivalently stated with the choice of $t = \sqrt{ \log \frac{1}{\delta} / 2n }$ as:
\begin{equation*}
    | \W_p^p(\mu,\hat{\mu}_n) - \E\W_p^p(\mu,\hat{\mu}_n) |
    \leq \sqrt{ \log (1/\delta) / 2n } ,
\end{equation*}
holds with at least probability $1 - \delta$.
Under the conditions stated in proposition \eqref{prop:WB17_prop16}, and combining its result with the above relation, we have
\begin{equation*}
    \W_p^p (\mu,\hat{\mu}_n) \leq c n^{-p/d} + \sqrt{ \log (1/\delta) / 2n } ,
\end{equation*}
where $c = 27^p (2 + \frac{1}{3^{d/2 - p} - 1})$.
Combining this for all terms in the left-hand side of \eqref{eq:correspondence_disambiguity_criterion_full} yields the stated result.
\end{proof}

\subsection{Putting everything together}

The final proof of Theorem \ref{thm:cluster_disambiguity_criterion} is a simplified version of Theorem \ref{thm:cluster_disambiguity_criterion_full}'s since we assert a stronger (but cleaner) exact low-rank assumption to streamline communication.
When the data is \emph{exactly} supported in low-dimensions (as opposed to approximately), the $\varepsilon$-fattening disappears (i.e., $\varepsilon \rightarrow 0$) thus any positive $\sigma < \varepsilon$ will send $n \to \infty$, implying that the rapid convergence in dimensions $d \ll D$ holds for $n \to \infty$.
Hence an identical result holds, with the sole condition that $d > 4$. $\square$


\section{Proof of Theorem \ref{thm:cluster_based_perturbation_bounds}}

We apply a very recent perturbation bound for the Procrustes problem developed by Arias-Castro et al. \cite{arias2018perturbation} to subsequently state a cluster-based alignment bound.
First, we outline the perturbation bound for the classical Procrustes problem below.

\begin{theorem} [Procrustes perturbation bounds, Theorem 1 \cite{arias2018perturbation}]
\label{thm:procrustes_perturbation_bound}
Consider short matrices $\v{X},\v{Y} \in \reals^{d \times n}$ with $d < n$ and $\v{X}$ having full rank.
Set $\varepsilon^2 = \norm{\v{Y}^\top\v{Y} - \v{X}^\top\v{X}}_p$, where $\norm{\cdot}_p$ denotes the Schatten $p$-norm.
Denote the singular value decomposition of $\v{X} = \v{U} \v{\Sigma} \v{V}^\top$, where $\v{\Sigma}$ contains diagonal elements $\sigma_1 \geq \sigma_2 \geq \dots \sigma_d > 0 = \dots = 0$, and let $\v{X}^\dagger$ be the pseudo-inverse of $\v{X}$, i.e., $\v{X}^\dagger = \v{U} \v{\Sigma}^\dagger \v{V}^\top$, where $\v{\Sigma}^\dagger = \diag(\sigma_1^{-1},\sigma_2^{-1},\dots,\sigma_d^{-1},0,\dots,0)$.
If $\norm{\v{X}^\dagger} \varepsilon \leq \frac{1}{\sqrt{2}} (\norm{\v{X}} \norm{\v{X}^\dagger})^{-1/2}$ then
\begin{equation*}
    \min_{\v{R} \in S(d,d)} \norm{ \v{R}\v{X} - \v{Y} }_p
    \leq
    (\norm{\v{X}}\norm{\v{X}^\dagger}+2) \norm{\v{X}^\dagger} \varepsilon^2.
\end{equation*}
\end{theorem}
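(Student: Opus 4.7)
The plan is to use the polar decomposition of $\v{M} := \v{Y}\v{X}^\dagger$ to exhibit a specific orthogonal candidate $\v{R}_0$ whose Procrustes cost is bounded by the right-hand side, which automatically upper bounds the minimum over $S(d,d)$. Since $\v{X}$ has full row rank, $\v{X}\v{X}^\dagger = \v{I}_d$, so the decomposition $\v{M} = \v{R}_0\v{H}$ with $\v{R}_0 \in S(d,d)$ and $\v{H} = (\v{M}^\top\v{M})^{1/2}$ PSD is well-defined. The argument then proceeds by estimating $\|\v{R}_0\v{X} - \v{Y}\|_p$ via two ingredients: quantifying how close $\v{H}$ is to $\v{I}$, and controlling the component of $\v{Y}$ that lies outside the row space of $\v{X}$.

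For the first ingredient, the identity $(\v{X}\v{X}^\dagger)^\top(\v{X}\v{X}^\dagger) = \v{I}_d$ gives
\begin{equation*}
    \v{H}^2 - \v{I} = (\v{X}^\dagger)^\top\bigl(\v{Y}^\top\v{Y} - \v{X}^\top\v{X}\bigr)\v{X}^\dagger ,
\end{equation*}
so Schatten sub-multiplicativity yields $\|\v{H}^2 - \v{I}\|_p \leq \|\v{X}^\dagger\|^2\varepsilon^2$. The smallness hypothesis forces $\|\v{X}^\dagger\|^2\varepsilon^2 \leq 1/2$, so Weyl's inequality delivers $\sigma_{\min}(\v{H}) \geq 1/\sqrt{2}$. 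Factoring $\v{H}^2 - \v{I} = (\v{H} - \v{I})(\v{H} + \v{I})$ and noting $\|(\v{H} + \v{I})^{-1}\| \leq (1 + 1/\sqrt{2})^{-1} < 1$ then produces $\|\v{H} - \v{I}\|_p \leq \|\v{X}^\dagger\|^2\varepsilon^2$.

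For the second ingredient, let $\v{P}_X := \v{X}^\dagger\v{X}$ be the projector onto the row space of $\v{X}$. Adding and subtracting $\v{R}_0\v{H}\v{X} = \v{M}\v{X} = \v{Y}\v{P}_X$ produces the key algebraic decomposition
\begin{equation*}
    \v{R}_0\v{X} - \v{Y} \; = \; \v{R}_0(\v{I} - \v{H})\v{X} \; - \; \v{Y}(\v{I} - \v{P}_X) .
\end{equation*}
The first summand is controlled by sub-multiplicativity: $\|\v{R}_0(\v{I} - \v{H})\v{X}\|_p \leq \|\v{X}\|\|\v{I} - \v{H}\|_p \leq \|\v{X}\|\|\v{X}^\dagger\|^2\varepsilon^2$.

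The main obstacle is bounding $\|\v{Y}(\v{I} - \v{P}_X)\|_p$ at the correct $\varepsilon^2$ scale, since a direct Hölder-trace estimate would yield only an $O(\varepsilon)$ bound. The trick is to exploit that $\v{Y}^\top\v{Y}$ is PSD of rank at most $d$. In an orthonormal basis whose first $d$ vectors span the row space of $\v{X}$, write $\v{Y}^\top\v{Y} = \begin{pmatrix}\v{A} & \v{B}\\\v{B}^\top & \v{C}\end{pmatrix}$. Because the rank of $\v{Y}^\top\v{Y}$ equals the size of $\v{A}$ and $\v{A}$ is invertible for $\varepsilon$ small enough, the Schur complement vanishes: $\v{C} = \v{B}^\top\v{A}^{-1}\v{B}$. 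The off-block equals $\v{B} = \v{P}_X\v{E}(\v{I} - \v{P}_X)$ with $\v{E} = \v{Y}^\top\v{Y} - \v{X}^\top\v{X}$ (using $\v{X}^\top\v{X}(\v{I} - \v{P}_X) = 0$), giving $\|\v{B}\|_p \leq \varepsilon^2$; and a Weyl estimate combined with $\|\v{X}^\dagger\|^2\varepsilon^2 \leq 1/2$ gives $\sigma_{\min}(\v{A}) \geq 1/(2\|\v{X}^\dagger\|^2)$. Combining these via the Schatten contractivity $\|\v{B}^\top\v{A}^{-1}\v{B}\|_{p/2} \leq \|\v{A}^{-1}\|\|\v{B}\|_p^2$ together with the identity $\|\v{Y}(\v{I} - \v{P}_X)\|_p^2 = \|\v{C}\|_{p/2}$ (which reduces to the trace identity $\tr(\v{A}^{-1}\v{B}\v{B}^\top)$ in the Frobenius case) produces $\|\v{Y}(\v{I} - \v{P}_X)\|_p \leq \sqrt{2}\|\v{X}^\dagger\|\varepsilon^2 \leq 2\|\v{X}^\dagger\|\varepsilon^2$. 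A triangle inequality on the two summands then assembles the stated bound $(\|\v{X}\|\|\v{X}^\dagger\| + 2)\|\v{X}^\dagger\|\varepsilon^2$.
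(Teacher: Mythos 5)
The paper does not actually prove this statement: it is imported verbatim as Theorem~1 of Arias-Castro et al.\ \cite{arias2018perturbation} and used as a black box in the proof of Theorem~\ref{thm:cluster_based_perturbation_bounds}, so there is no in-paper argument to compare yours against. Judged on its own merits, your proof is correct and complete. The polar factor $\v{R}_0$ of $\v{Y}\v{X}^\dagger$ is a legitimate feasible point (under the smallness hypothesis $\v{H}^2 \succeq \v{I}/2$, so $\v{M}$ is invertible and $\v{R}_0$ is genuinely orthogonal), the splitting $\v{R}_0\v{X}-\v{Y}=\v{R}_0(\v{I}-\v{H})\v{X}-\v{Y}(\v{I}-\v{P}_X)$ is exact, and the two estimates $\norm{\v{I}-\v{H}}_p\leq \norm{\v{X}^\dagger}^2\varepsilon^2$ and $\norm{\v{Y}(\v{I}-\v{P}_X)}_p\leq 2\norm{\v{X}^\dagger}\varepsilon^2$ assemble into exactly the stated constant $(\norm{\v{X}}\norm{\v{X}^\dagger}+2)\norm{\v{X}^\dagger}\varepsilon^2$. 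The Schur-complement argument for the out-of-row-space component is the essential idea --- it is what upgrades the naive $O(\varepsilon)$ estimate of $\norm{\v{Y}(\v{I}-\v{P}_X)}_p$ to $O(\varepsilon^2)$ --- and it is handled correctly: $\rank(\v{Y}^\top\v{Y})\leq d$ together with the lower bound $\v{A}\succeq (2\norm{\v{X}^\dagger}^2)^{-1}\v{I}$ (which forces $\rank(\v{Y}^\top\v{Y})=\rank(\v{A})=d$) does yield $\v{C}=\v{B}^\top\v{A}^{-1}\v{B}$. Two cosmetic remarks: the hypothesis enters your argument only through its weaker consequence $\norm{\v{X}^\dagger}^2\varepsilon^2\leq 1/2$ (using $\norm{\v{X}}\norm{\v{X}^\dagger}\geq 1$), so you in fact establish the bound under a slightly milder condition than stated; and for $p<2$ the quantity $\norm{\cdot}_{p/2}$ is only a quasi-norm, but you never invoke its triangle inequality --- only the identity $\norm{\v{Z}}_p^2=\norm{\v{Z}^\top\v{Z}}_{p/2}$ and operator-norm submultiplicativity via $\v{A}^{-1/2}$ --- so nothing breaks.
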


Directly applying Theorem \ref{thm:procrustes_perturbation_bound} using the Schatten $2$-norm (i.e., the Frobenius norm) yields
\begin{align*}
    \min_{\v{R} \in S(d,d)} \norm{\v{R}\v{X} - \v{Y}}_F^2
    &=  \min_{\v{R} \in S(d,d)}
        \sum_{i=1}^c
        \tr (   \v{Q}_{ii}^\top \v{X}_i^\top \v{X}_i \v{Q}_{ii}
              + \v{Y}_i^\top \v{Y}_i
              - 2 \v{Y}_i\v{Q}_{ii}^\top\v{X}_i^\top\v{R}^\top ) \\
    &\leq 
       ( ( \norm{\v{X}} \norm{\v{X}^\dagger} + 2 )
              \norm{\v{X}^\dagger} \varepsilon^2 )^2 := B^2 ,
\end{align*}
\begin{equation*}
    \Rightarrow
    \max_{\v{R} \in S(d,d)}
    \sum_{i=1}^c
    \tr ( 2\v{Y}_i\v{Q}_{ii}^\top\v{X}_i^\top\v{R}^\top )
    \geq
    \tr ( \v{X} \v{X}^\top + \v{Y} \v{Y}^\top )
    - B^2 ,
\end{equation*}
where $\varepsilon^2 := \norm{\v{Y}^\top\v{Y} - \v{X}^\top\v{X}}_F$
and under the conditions that $\v{X}$ is full rank and $\norm{\v{X}^\dagger} \varepsilon \leq \frac{1}{\sqrt{2}} (\norm{\v{X}} \norm{\v{X})^\dagger})^{-1/2}$.
When the criterion given by corollary \ref{cor:local_disambiguity_criterion} is satisfied, we are guaranteed cluster-correspondences $\v{P}^\star = \v{I}$.
Therefore $\sum_{ij} P_{ij} C_{ij}(\v{R}) = \sum_i C_{ii}(\v{R})$.
We utilize this to lower bound $\sum_i C_{ii}(\v{R})$ as follows:
\begin{align*}
    &\min_{\v{R} \in S(d,d)} \sum_{i=1}^c C_{ii}(\v{R}) \\
    =&  \min_{\v{R} \in S(d,d)}
        \sum_{i=1}^c
        \tr (   \v{X}_i \diag(\v{Q}_{ii} \ones) \v{X}_i^\top
              + \v{Y}_i \diag(\v{Q}_{ii}^\top \ones) \v{Y}_j^\top
              - 2 \v{Y}_i \v{Q}_{ii}^\top \v{X}_i^\top \v{R}^\top ) \\
    =&  \sum_{i=1}^c
        \tr (   \v{X}_i \diag(\v{Q}_{ii} \ones) \v{X}_i^\top
              + \v{Y}_i \diag(\v{Q}_{ii}^\top \ones) \v{Y}_j^\top )
        - \max_{\v{R} \in S(d,d)}
          \sum_{i=1}^c
          \tr ( 2 \v{Y}_j \v{Q}_{ii}^\top \v{X}_i^\top \v{R}^\top ) \\
    \leq&
        B^2 +
        \sum_{i=1}^c
        \tr (   \v{X}_i ( \diag(\v{Q}_{ii} \ones) - \v{Q}_{ii} \v{Q}_{ii}^\top ) \v{X}_i^\top 
              + \v{Y}_j ( \diag(\v{Q}_{ii} \ones) - \v{I} ) \v{Y}_j^\top  ) \\
    =&
        B^2 +
        \sum_{i=1}^c
        \tr (   \v{X}_i ( 1/n - \v{Q}_{ii} \v{Q}_{ii}^\top ) \v{X}_i^\top 
              + \v{Y}_j ( 1/n - \v{I} ) \v{Y}_j^\top  ) .
\end{align*}
$\square$


\section{Proof of Lemma \ref{lemma:worst_case_alignment_scenario}}

To ease notation, let $\v{A} = \v{Y}_j \v{Q}_{ij}^\top \v{X}_i^\top$, and $\v{C} = \frac{1}{n} \v{X}_i \v{X}_i^\top + \frac{1}{n} \v{Y}_j \v{Y}_j^\top$.
Let $\v{A}$ be decomposed by the singular-value decomposition as $\v{A} = \tilde{\v{U}} \tilde{\v{\Sigma}} \tilde{\v{V}}^\top$.
Lastly, let $\v{U} = [\v{U}',\v{U}'']$ and $\v{V} = [\v{V}',\v{V}'']$, where $\langle \v{U}'',\v{U}' \rangle = \langle \v{V}'',\v{V}' \rangle = 0$.
Then it follows that
\begin{align}
    \tr( \v{U}^\top \tilde{\v{U}} \tilde{\v{\Sigma}} \tilde{\v{V}}^\top \v{V} )
    &\leq
    \tr( \tilde{\v{\Sigma}} ) 
    = \tr( \tilde{\v{U}}^\top \tilde{\v{U}} \tilde{\v{\Sigma}} \tilde{\v{V}}^\top\tilde{\v{V}} )
    \label{eq:trace_inequality_alignment}
    \\
    \tr( \v{U}^\top \v{A} \v{V} )
    &\leq
    \tr( \tilde{\v{U}}^\top \v{A} \tilde{\v{V}} )
    \nonumber
    \\
    \tr \Big( \v{C} - 2 \v{U}'^\top \v{A} \v{V}' - 2 \v{U}''^\top \v{A} \v{V}'' \Big)
    &\geq
    \tr \Big( \v{C} - 2 \tilde{\v{U}}^\top \v{A} \tilde{\v{V}} \Big)
    \nonumber
    \\
    \min_{\substack{\v{U}'', \v{V}'' \in S(d,d-r) : \\
                    \langle \v{U}',\v{U}'' \rangle = \langle \v{V}',\v{V}'' \rangle = 0}}
    \tr \Big( \v{C} - 2 \v{A} \v{V}' \v{U}'^\top - 2 \v{A} \v{V}'' \v{U}''^\top \Big)
    &\geq \min_{\tilde{\v{U}}, \tilde{\v{V}} \in S(d,d)}
       \tr \Big( \v{C} - 2 \v{A} \tilde{\v{V}} \tilde{\v{U}}^\top \Big)
    \nonumber
    \\
    \min_{\v{R} \in \T(\v{U}',\v{V}')} C_{ij} (\v{R})
    &\geq
    \min_{\v{R} \in S(d,d)} C_{ij} (\v{R}) .
    \nonumber
\end{align}

What remains is for us to show the condition for equality.
From \eqref{eq:trace_inequality_alignment}, we have that
\begin{equation*}
    \tr( \v{U}^\top \tilde{\v{U}} \tilde{\v{\Sigma}} \tilde{\v{V}}^\top \v{V} )
    =
    \tr( \tilde{\v{V}}^\top \v{V} \v{U}^\top \tilde{\v{U}} \tilde{\v{\Sigma}} )
    \leq
    \tr( \tilde{\v{\Sigma}} ) ,
\end{equation*}
with equality holding if $\tilde{\v{V}}^\top \v{V} \v{U}^\top \tilde{\v{U}} = \v{I}$, implying that $\tilde{\v{U}}^\top \v{U} = \tilde{\v{V}}^\top \v{V}$, which imply that
\begin{equation*}
    \langle \tilde{\v{U}} , \v{U}' \rangle
    =
    \langle \tilde{\v{V}} , \v{V}' \rangle,
    \qquad
    \langle \tilde{\v{U}} , \v{U}'' \rangle
    =
    \langle \tilde{\v{V}} , \v{V}'' \rangle ,
\end{equation*}
which are obtained via the substitutions $\v{U} = [\v{U}',\v{U}'']$ and $\v{V} = [\v{V}',\v{V}'']$. $\square$


\newpage

\section{Additional Figures}
\label{sec:morefigures}
In this section, we provide an additional figure to compare HiWA with other competitor methods on the brain decoding example.

\begin{figure}[h!]
  \centering
  \includegraphics[width=0.8\textwidth]{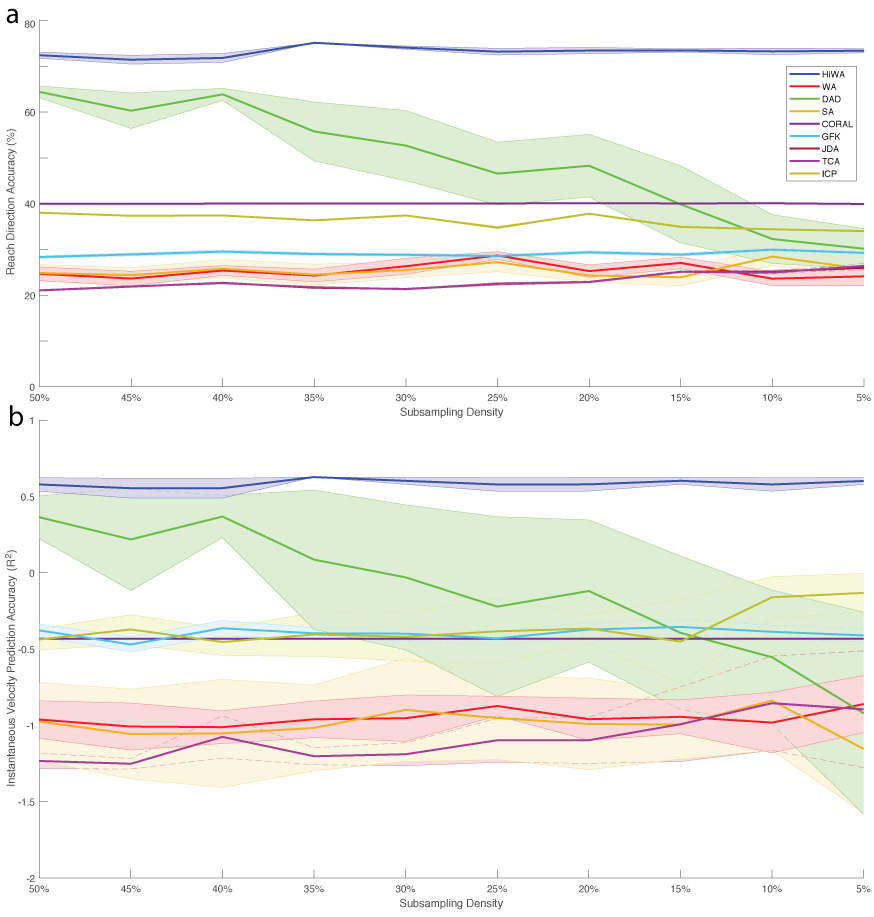}
  \vspace{-4mm}
  \caption{\small
  {\em Additional comparisons on brain decoding dataset:}~ Here, we compared the decoding accuracy in terms of: (a) reach direction decoding or the percentage of time points correctly classified in one of four reach directions, and (b) the instantaneous decoding accuracy measured in terms of their $R^2$ values. In both cases, we compare HiWA with WA, DAD, and 6 other methods that are discussed in the main text and studied in synthetic examples.
  }
  \vspace{-5mm}
  \label{fig:supp_neural_data}
\end{figure}

\end{document}